\definecolor{green}{RGB}{1,150,32}
\newtheorem{theorem}{Theorem}[section]
\newtheorem{lemma}[theorem]{Lemma}
\newtheorem{definition}{Definition}
\newcommand{\Lagr}{\mathcal{L}}
\DeclareMathOperator\supp{Supp}
\def\eqref#1{equation~\ref{#1}}
\def\1{\bm{1}}
\DeclareMathAlphabet{\mathsfit}{\encodingdefault}{\sfdefault}{m}{sl}
\SetMathAlphabet{\mathsfit}{bold}{\encodingdefault}{\sfdefault}{bx}{n}
\def\gA{{\mathcal{A}}}
\def\gD{{\mathcal{D}}}
\def\gS{{\mathcal{S}}}
\newcommand{\E}{\mathbb{E}}
\title{\Large Closing the Gap between TD Learning and Supervised Learning -- A Generalisation Point of View
}
\author{
\begin{minipage}{\textwidth}
\centering
\vspace{0.5em}
Raj Ghugare$^1$  \qquad Matthieu Geist$^2$  \qquad  Glen Berseth$^{1, *}$ \qquad Benjamin Eysenbach$^{3, *}$\\
\vspace{0.5em}
\normalfont $^1$Mila, Université de Montréal\qquad $^2$Google DeepMind \qquad $^3$Princeton University \\
\vspace{0.3em}
\texttt{\small 
raj.ghugare@mila.quebec}
\end{minipage}
}
\begin{document}

\maketitle
\vspace{-1em}
\begin{abstract} 

Some reinforcement learning (RL) algorithms can \textit{stitch} pieces of experience to solve a task never seen before during training. This oft-sought property is one of the few ways in which RL methods based on dynamic-programming differ from RL methods based on supervised-learning (SL). Yet, certain RL methods based on off-the-shelf SL algorithms achieve excellent results without an explicit mechanism for stitching; it remains unclear whether those methods forgo this important stitching property. This paper studies this question for the problems of achieving a target goal state and achieving a target return value. Our main result is to show that the stitching property corresponds to a form of combinatorial generalization: after training on a distribution of (state, goal) pairs, one would like to evaluate on (state, goal) pairs not seen together in the training data. Our analysis shows that this sort of generalization is different from \textit{i.i.d.} generalization. This connection between stitching and generalisation reveals why we should not expect SL-based RL methods to perform stitching, even in the limit of large datasets and models. Based on this analysis, we construct new datasets to explicitly test for this property, revealing that SL-based methods lack this stitching property and hence fail to perform combinatorial generalization. Nonetheless, the connection between stitching and combinatorial generalisation also suggests a simple remedy for improving generalisation in SL: data augmentation. We propose a \emph{temporal} data augmentation and demonstrate that adding it to SL-based methods enables them to successfully complete tasks not seen together during training. On a high level, this connection illustrates the importance of combinatorial generalization for data efficiency in time-series data beyond tasks beyond RL, like audio, video, or text~\footnote{\href{https://github.com/RajGhugare19/stitching-is-combinatorial-generalisation}{Code link}; $^*$Equal advising}.
\looseness=-1
\end{abstract}

\section{Introduction}
Many recent methods view RL as a purely SL problem of mapping input states and desired goals, to optimal actions~\cite{schmidhuber2020reinforcement, chen2021decision, emmons2021rvs}. These methods have gained a lot of attention due to their simplicity and scalability~\cite{lee2022multigame}. These methods sample a goal~$g$ (or a return~$r$) from the dataset, which was previously encountered after taking an action~$a$ from a state~$s$, and then imitate $a$ by treating it as an optimal label for reaching $g$ (or achieving return~$r$) from $s$. These methods, collectively known as outcome conditional behavioral cloning algorithms (OCBC), achieve excellent results on common benchmarks~\cite{emmons2021rvs}. However, at a fundamental level, there are some important differences between RL and SL. This paper studies one of those differences: the capability of some RL algorithms to stitch together pieces of experience to solve a task never seen during training. While some papers have claimed that some OCBC approaches already have this stitching property~\citep{chen2021decision}, both our theoretical and empirical analyses suggest some important limitations of these prior claims. 
\looseness=-1

The stitching property~\citep{ziebart2008maximum} is common among RL algorithms that perform dynamic programming (e.g., DQN~\citep{mnih2013playing}, DDPG~\citep{lillicrap2015continuous}, TD3~\citep{fujimoto2018addressing}, IQL~\citep{kostrikov2021offline}). It is often credited for multiple properties of dynamic programming algorithms like superior data efficiency and off policy reasoning (See \cref{sec:weak} for detailed discussion). Importantly, we show that stitching also allows for a third property -- the ability to infer solutions to a combinatorial number of tasks during test time, like navigating between certain state-goal pairs that never appear together (but do appear separately) during training. An example of stitching is that humans don't need access to optimal actions to go from an airport to new tourist places; they can use their previous knowledge to navigate to a taxi-stand, which would take them to any location. But, purely supervised approaches to sequential problems like RL, do not explicitly take such temporal relationships into account. Even in other sequential domains like language, a large body of work is dedicated to study the combinatorial generalisation abilities of large language models~\cite{wiedemer2023compositional, saparov2023language, zhang2023unveiling, nye2021work}. Our work shows that combinatorial generalisation is also required to solve tasks in the context of RL.
\looseness=-1

We start by formalising stitching as a form of combinatorial generalisation. We observe that when data are collected from a mixture of policies, there can be certain (state, goal) pairs that are never visited in the same trajectory, despite being frequented in separate trajectories. Information from multiple trajectories should be \textit{stitched} to complete these tasks. Because such tasks (state-goal pairs) are seen separately, but never together, we call the ability of algorithms to perform these tasks as combinatorial generalisation. This connection further motivates an inspiration from SL; if generalisation is the problem, then data augmentation is likely an effective approach~\citep{perez2017effectiveness}. We propose a form of temporal data augmentation for OCBC methods so that they acquire this stitching property and succeed in navigating between unseen (start, goal) pairs or achieving greater returns than the offline dataset. This form of data augmentation involves \textit{time}, rather than random cropping or shifting colors. Intuitively, temporal data augmentation augments the original goal, with a new goal sampled from a different overlapping trajectory in the offline dataset. This data augmentation scheme does require an estimate of distance between states to detect overlapping trajectories. We demonstrate that this data augmentation is theoretically backed, and empirically endows OCBC algorithms with the stitching property on difficult state-based and image-based tasks.
\looseness=-1

Our primary contribution is to provide a formal framework for studying stitching as a form of combinatorial generalisation. Because of this connection, we hypothesize that OCBC methods do not perform stitching. Perhaps surprisingly, simply increasing the volume of data does not guarantee this sort of combinatorial generalization. Our empirical results support the theory: we demonstrate that prior RL methods based on SL (DT~\citep{chen2021decision} and RvS~\citep{emmons2021rvs}) fail to perform stitching, even when trained on abundant quantities of data. Our experiments reveal a subtle consideration with the common D4RL datasets~\citep{fu2021d4rl}: while these datasets are purported to test for exactly this sort of combinatorial generalization, data analysis reveals that ``unseen'' (state, goal) pairs do actually appear in the dataset. Thus, our experiments are run on a new variant of these datasets that we constructed for this paper to explicitly test for combinatorial generalization~\footnote{Open sourced code and data is available: \url{https://github.com/RajGhugare19/stitching-is-combinatorial-generalisation}}. On 10 different environments, including both state and image based tasks, and goal and return conditioning, adding data augmentation improves the generalisation capabilities of SL approaches by up to a factor of $2.5$. 
\looseness=-1

\vspace{-5pt}
\section{Related Work}
\label{sec:prior-work}

\paragraph{Prior methods that do some form of explicit stitching.} Previous work on stitching abilities of SL algorithms have conflicting claims. The DT paper~\citep{chen2021decision} shows experiments where their SL-based method performs stitching. On the contrary, \citep{brandfonbrener2023does}~provide an example where SL algorithms do not perform stitching. RvS~\citep{emmons2021rvs} shows that a simple SL-based algorithm can surpass the performance of TD algorithms. In tabular settings,~\cite{cheikhi2023statistical} show that the benefits of TD-learning arise from trajectory stitching. We provide a formal definition of stitching as a form of combinatorial generalisation. In contrast, generalisation in RL has been generally associated with making correct predictions for unseen but similar states and actions~\citep{zhang2018dissection, cobbe2019quantifying, young2023benefits}, planning~\cite{NEURIPS2021_099fe6b0}, ignoring irrelevant details~\citep{zhang2021learning, bharadhwaj2022information, igl2019generalisation}, or robustness towards changes in the reward or transition dynamics~\citep{morimoto2000robust, tessler2019action, eysenbach2021robust}. 
\vspace{-0.5em}
\paragraph{Offline RL datasets.} A large amount of work is done to build offline RL datasets. \cite{fu2021d4rl}~provided a first standard offline RL benchmark, \cite{yarats2022dont}~provide exploratory offline datasets to underscore the importance of diverse data, ~\cite{zhou2022real, zhu2023pearl} focus on data efficiency and real world deployement and \cite{kurenkov2022showing} provide benchmarks that also compare the online evaluation budget of offline RL algorithms. Although many offline RL papers informally allude to stitching, we devise new offline RL datasets that precisely test the stitching abilities of offline RL algorithms.
\vspace{-0.5em}
\paragraph{Data augmentation in RL.} Data augmentation has been proposed as a remedy to improve generalisation in RL~\citep{stone2021distracting, kalashnikov2021mtopt, srinivas2020curl, hansen2021generalisation, kostrikov2021image, yarats2021mastering, lu2020sampleefficient}, akin to SL~\cite{Shorten2019aso}. Perhaps the most similar prior work are the ones which use dynamic programming to augment existing trajectories to improve the performance properties of SL algorithms~\citep{yamagata2023qlearning, paster2023return, char2022bats}. However, because these methods still require dynamic programming, they don't have the same simplicity that make SL algorithms appealing in the first place.
\looseness=-1

\section{Preliminaries}
\label{sec:prelims}

\paragraph{Controlled Markov processes.} 
\label{sec:cmp}
We will study the problem of goal-conditioned RL in a controlled Markov process with states $s \in \gS$ and actions $a \in \gA$. The dynamics are $p(s' \mid s, a)$, the initial state distribution is $p_0(s_0)$, the discount factor is $\gamma$. The policy $\pi(a, \mid s,g)$ is conditioned on a pair of state and goal $s,g \in \gS$.
For a policy $\pi$, define $p_t^\pi(s_t \mid s_0)$ as the distribution over states visited after exactly $t$ steps. We can then define the discounted state occupancy distribution and its conditional counterpart as
\begin{align}
\label{eq:dso}
p_+^\pi(s_{t+}=g) &\triangleq \E_{s \sim p_0(s_0)}\left[p_+^\pi(s_{t+}=g \mid s_0=s)\right], \\
p_+^\pi(s_{t+}=g \mid s_0=s) &\triangleq (1-\gamma) \sum_{t=0}^{\infty} \gamma^{t} p_t^\pi(s_t=g \mid s_0=s),
\end{align}
where $s_{t+}$ is the variable that specifies a future state corresponding to the discounted state occupancy distribution. Given a state-goal pair $s, g \sim p_{\text{test}}(s,g)$
at test time, the task of the policy is to maximise the probability of reaching the goal $g$ in the future
\begin{equation}
\label{eq:goal-gradient}
    \max_{\pi} J(\pi), \quad \text{where } \;  J(\pi) = \mathbb{E}_{s, g \sim p_{\text{test}}(s,g)} \left[ p_+^{\pi}(s_{t+} = g \mid s_0 = s) \right].
\end{equation}

\paragraph{Data collection.}
Our work focuses on the offline RL setting where the agent has access to a fixed dataset of $N$ trajectories $\mathcal{D} = (\{s_0^i, a_0^i, .. \})_{i=1}^{N}$. Our theoretical analysis will assume that the dataset is collected by a set of policies $\{\beta(a \mid s, h)\}$, where $h$ specifies some context. For example, $h$ could reflect different goals, different language instructions, different users or even different start state distributions. Precisely, we assume that the data was collected by first sampling a context from a distribution $p(h)$, and then sampling a trajectory from the corresponding policy $\beta(a \mid s, h)$. We will use the shorthand notation $\beta_h(\cdot \mid \cdot) = \beta(\cdot \mid \cdot, h)$ to denote the data collecting policy conditioned on context $h$. Trajectories are assumed to be stored without $h$, hence the context denotes all hidden information that the true data collection policies used to collect the data. 

This setup of collecting data corresponds to a mixture of Markovian policies\footnote{Note that the mixture is at the level of trajectories, not at the level of individual actions.}. There is a classic result saying that, for every such \emph{mixture} of Markovian policies, there exists a Markovian policy that has the same discounted state occupancy measure.
\begin{lemma}[Rephrased from Theorem 2.8 of~\cite{ziebart2010modeling}, Theorem 6.1 of~\cite{feinberg2012handbook}]
    Let a set of context-conditioned policies $\{\beta_h(a \mid s)\}$ and distribution over contexts $p(h)$ be given. There exists a Markovian policy $\beta(a \mid s)$ such that it has the same discounted state occupancy measure as the mixture of policies:
    \begin{equation}
        p_+^{\beta}(s_{t+}) = \E_{p(h)}\left[p_+^{\beta_h}(s_{t+}) \right].
    \end{equation}
    \label{lemma:mixture}
\end{lemma}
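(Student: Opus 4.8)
The plan is to construct the desired Markovian policy $\beta$ explicitly from its state-action occupancy measure and then verify that this occupancy measure is realizable by a Markovian policy. First I would define the (unnormalized) discounted state occupancy of the mixture, $\rho(s) \triangleq \E_{p(h)}[p_+^{\beta_h}(s_{t+}=s)]$, and the corresponding state-action occupancy $\mu(s,a) \triangleq \E_{p(h)}\big[p_+^{\beta_h}(s_{t+}=s)\,\beta_h(a\mid s)\big]$. The key structural fact is that each $p_+^{\beta_h}$, being a discounted occupancy measure of the Markovian policy $\beta_h$, satisfies the Bellman flow equation $p_+^{\beta_h}(s') = (1-\gamma)p_0(s') + \gamma \sum_{s,a} p(s'\mid s,a)\,p_+^{\beta_h}(s)\,\beta_h(a\mid s)$. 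Taking the expectation over $p(h)$ and using linearity, $\rho$ and $\mu$ inherit the same flow constraint: $\rho(s') = (1-\gamma)p_0(s') + \gamma\sum_{s,a} p(s'\mid s,a)\,\mu(s,a)$, with $\sum_a \mu(s,a) = \rho(s)$.

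Next I would define the candidate Markovian policy by the usual normalization, $\beta(a\mid s) \triangleq \mu(s,a)/\rho(s)$ on the support of $\rho$ (and arbitrarily elsewhere). It remains to check that the discounted state occupancy of $\beta$ equals $\rho$. For this I would invoke the standard uniqueness result: the Bellman flow equation, together with the normalization that the total mass is $1/(1-\gamma)$ times a probability distribution, pins down the occupancy measure of a Markovian policy uniquely. Since $\mu(s,a) = \rho(s)\beta(a\mid s)$ satisfies exactly this flow equation with the same initial distribution $p_0$ and discount $\gamma$, and since $p_+^\beta$ is by definition the unique solution of that same equation for the policy $\beta$, we conclude $p_+^\beta(s_{t+}) = \rho(\cdot) = \E_{p(h)}[p_+^{\beta_h}(s_{t+})]$, which is the claim. (Equivalently, one can avoid citing uniqueness by verifying directly that the Neumann-series / power-iteration expansion of the flow operator converges from any start to $\rho$.)

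The main obstacle is purely technical rather than conceptual: handling states $s$ with $\rho(s) = 0$ (where $\beta(a\mid s)$ is undefined) and, in continuous state spaces, phrasing everything in terms of measures rather than sums — the Bellman flow equation becomes an identity of measures and the "division" $\mu/\rho$ must be read as a Radon–Nikodym derivative (a regular conditional distribution), which exists since $\mu(\cdot, \mathcal{A}) \ll \rho$. These measure-theoretic subtleties are exactly what the cited theorems of~\cite{ziebart2010modeling} and~\cite{feinberg2012handbook} dispatch, so in the paper I would state the construction of $\beta$ via the occupancy measure, note that the flow equation is preserved under mixing by linearity, and defer the regularity details to those references.
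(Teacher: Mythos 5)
Your proposal is correct and is essentially the paper's own route: the normalized occupancy ratio $\mu(s,a)/\rho(s)$ is exactly the posterior-weighted mixture $\beta(a \mid s) = \sum_h \beta_h(a \mid s)\, p^\beta(h \mid s)$ given in \cref{eq:policy-mixture}, and the paper likewise defers the verification of this construction to Theorem 6.1 of~\cite{feinberg2012handbook}. Your added sketch --- linearity of the Bellman flow equation under mixing over $h$, plus uniqueness of the discounted occupancy measure for a fixed Markovian policy (with the null-state and Radon--Nikodym caveats you note) --- is precisely the standard argument underlying that cited theorem, so there is no gap.
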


The policy $\beta(a \mid s)$ is simple to construct mathematically as follows. For data collected from the mixture of context conditioned policies, let $p^\beta(h \mid s)$ be the distribution over the context given that the policy arrived in state $s$.
\begin{equation}
\label{eq:policy-mixture}
    \beta(a \mid s) \triangleq \sum_h \beta_h(a \mid s)p^\beta(h \mid s). 
\end{equation}
Theorem 6.1~\cite{feinberg2012handbook} proves the correctness of this construction. The policy $\beta(a \mid s)$ is also easy to construct empirically --~simply perform behavioral cloning (BC) on data aggregated from the set of policies. We will hence call this policy the BC policy.

\paragraph{Outcome Conditional behavioral cloning (OCBC).}
\label{sec:ocbc}
While our theoretical analysis will consider generalisation abstracted away from any particular RL algorithm, we will present empirical results using a simple and popular class of goal-conditioned RL methods: Outcome conditional behavioral cloning~\citep{eysenbach2022imitating} (DT~\citep{chen2021decision}, URL~\citep{schmidhuber2020reinforcement}, RvS~\citep{emmons2021rvs}, GCSL~\citep{ghosh2019learning} and many others~\citep{sun2019policy, kumar2019rewardconditioned}). These methods take as input a dataset of trajectories $\mathcal{D} = (\{s_0^i, a_0^i, .. \})_{i=1}^{N}$ and learn a goal-conditioned policy $\pi(a \mid s, g)$ using a maximum likelihood objective:
\begin{equation}
    \label{eq:objective}
    \max_{\pi(\cdot \mid \cdot, \cdot)} \E_{(s, a, g) \sim \gD} \left[ \log \pi(a \mid s, g) \right].
\end{equation}
The sampling above can be done by first sampling a trajectory from the dataset (uniformly at random), then sampling a (state, action) pair from that trajectory, and setting the goal to be a random state that occurred later in that same trajectory. If we incorporate our data collecting assumptions, then this sampling can be written as

\begin{equation}
    \label{eq:objective-in-terms-of-data-collection}
    \max_{\pi(\cdot \mid \cdot, \cdot)} {\color{gray}\E_{ h \sim p(h )} \big[}
    \E_{\substack{a, s \sim \beta_h(a\mid s),\;p_{+}^{\beta_h}(s) \\s_{t+} \sim p_{+}^{\beta_h}(s_{t+} \mid s, a) }} \left[ \log \pi(a \mid s, s_{t+}) \right]
    {\color{gray}\big]}.
\end{equation}

\section{``Stitching'' as a form of combinatorial generalisation}
\label{sec:weak}

Before concretely defining stitching, we will describe three desirable properties that are colloquially associated with ``stitching'' and the learning dynamics of TD methods. \textit{(Property 1)}~The ability to select infrequently seen paths that are more optimal than frequent ones. While a shorter trajectory between the state and the goal may occur infrequently in the dataset, TD methods can find more examples of this trajectory by recombining pieces of different trajectories, thanks to dynamic programming. This property is enjoyed by both SARSA (expectation over actions) and Q-learning (max over actions) methods, and is primarily associated with the sample efficiency of learning.
\textit{(Property 2)}~The ability to evaluate policies different from those which collected the data, and perform multiple steps of policy improvement. This property is unique to Q-learning.
\textit{(Property 3)}~Temporal difference methods (both Q-learning and SARSA) can also recombine trajectories to find paths between states never seen together during training. This property is different from the first property in that it is not a matter of data efficiency -- temporal difference methods can find paths that will never be sampled from the data collecting policies, even if given infinite samples. All three of these properties are colloquially referred to as ``stitching'' in the literature. While these properties are not entirely orthogonal, they are distinct: certain algorithms may have just some of these properties.
Obtaining all these properties in a simpler (than TD) framework is difficult, and it remains unclear whether OCBC methods possess any of them. To better understand the differences and similarities this study focuses on the third property. We formalize this property as a form of generalisation, which we will refer to as combinatorial generalisation.

Defining combinatorial generalisation will allow us to analyze if and when OCBC methods perform stitching, both theoretically (this section) and experimentally (\cref{sec:experiments}). Intuitively, \emph{combinatorial generalisation} looks at connecting states and goals, which are never seen together in the same trajectory, but where a path between them is possible using the information present in different trajectories. It therefore tests a form of ``stitching''~\citep{fu2021d4rl, janner2021offline}, akin to ``combinatoral generalisation''~\citep{vankov2019training, hansenestruch2022bisimulation, Kirk_2023}. To define this generalisation, we will specify a training distribution and testing distribution. The training distribution corresponds to sampling a context $h \sim p(h)$ and then sampling an $(s, g)$ pair from the corresponding policy $\beta_h$. This is exactly how OCBC methods are trained in practice~(\Cref{sec:ocbc}). The testing distribution corresponds to sampling an $(s, g)$ pair from the BC policy $\beta(a \mid s)$ defined in~\Cref{eq:policy-mixture}. For each distribution, we will measure the performance $f^{\pi(\cdot \mid \cdot, g)}(s, g)$ of goal-conditioned policy $\pi(a \mid s, g)$.

\begin{definition}[Combinatorial generalisation]
    \label{def:stitching-gen}
    Let a set of context-conditioned policies $\{\beta_h(a\mid s)\}$ be given, along with a prior over contexts $p(h)$. Let $\beta(a \mid s)$ be the policy constructed via \cref{eq:policy-mixture}. Let $\pi(a \mid s, g)$ be a policy for evaluation.
    The combinatorial generalisation of a policy $\pi(a \mid s, g)$ measures the differences in goal-reaching performance for goals sampled $g \sim p_+^{\beta}(s_{t+} \mid s)$ versus goals sampled from $g \sim \E_{p(h)}[p_+^{\beta_h}(s_{t+} \mid s)]$:
    \begin{equation}
        \underbrace{\E_{\substack{s \sim p_+^\beta(s) \\g \sim p_+^{\beta}(s_{t+} \mid s)}} \left[ f^{\pi(\cdot \mid s, g)}(s, g) \right]}_{\text{test performance}} - \underbrace{\E_{\substack{h \sim p(h), \, s \sim p_+^{\beta_h}(s) \\g \sim p_+^{{\beta_h}}(s_{t+} \mid s)}} \left[ f^{\pi(\cdot \mid s, g)}(s, g) \right]}_{\text{train performance}}.
    \end{equation}
\end{definition}

The precise way performance $f$ is measured is not important for our analysis: ``generalisation'' simply means that the performance under one distribution is similar to the performance under another. In our experiments, we will look at performance measured by the success rate at reaching the commanded goal. On the surface, it could seem like both the test and train distributions are the same. \cref{lemma:mixture} about reducing mixtures of policies to a single Markovian policy seems to hint that this might be true. Indeed, this distinction has not been made before while analysing OCBC methods~\citep{eysenbach2022imitating, brandfonbrener2023does}. This misconception is demonstrated by the following lemma:

\begin{figure}[h]
\centering
    \begin{subfigure}[b]{0.28\textwidth}
    \centering
    \includegraphics[width=1\textwidth]{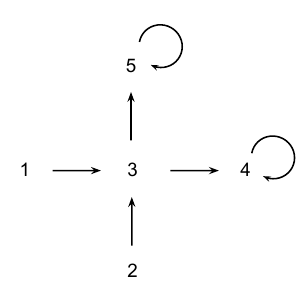}
    \vspace{0.0em}
    \subcaption{\footnotesize MDP.}
    \end{subfigure}
    ~
    \begin{subfigure}[b]{0.28\textwidth}
    \centering
    \includegraphics[width=1\textwidth]{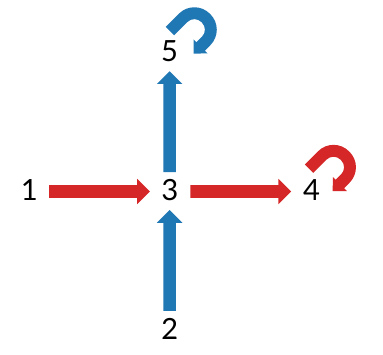}
    \vspace{0.0em}
    \subcaption{\footnotesize $\textcolor{blue}{\beta_{h=1}}$, $\textcolor{red}{\beta_{h=2}}$.}
    \end{subfigure}
    ~
    \begin{subfigure}[b]{0.29\textwidth}
    \centering
    \includegraphics[width=1\textwidth]{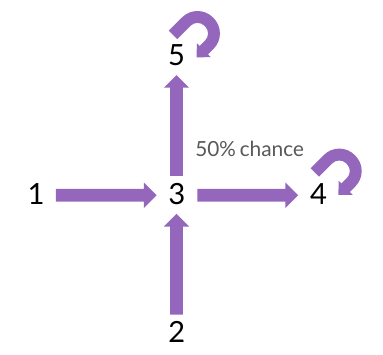}
    \vspace{0.0em}
    \subcaption{\footnotesize $\textcolor{violet}{\beta}$.}
    \end{subfigure}
    \caption{\footnotesize \label{fig:weak-stitching} \textit{(a)} The MDP has 5 states and two actions (up and right). \textit{(b)} \textbf{Training distribution}: Data is collected using two contexts conditioned policies shown in \textcolor{blue}{blue} and \textcolor{red}{red}. \textit{(c)} \textbf{Testing distribution}: The behavior cloned policy (\eqref{eq:policy-mixture}) is shown in \textcolor{violet}{purple}. During training, the state-goal pair $\{s_t=2, s_{t+}=4\}$ is never sampled, as no data collecting policy goes from state $2$ to state $4$. But the behavior cloned policy has non zero probability of sampling the state-goal pair $\{s_t=2, s_{t+}=4\}$. Because of this discrepancy between the train and test distributions, OCBC algorithms do not have any guarantees of outputting the correct action for the state-goal pair $\{s_t=2, s_{t+}=4\}$. Whereas dynamic programming based methods can propagate rewards through the backwards \textit{stitched} path of $4 \rightarrow 3 \rightarrow 2$ to output the correct action.
    }
\end{figure}

\begin{lemma}
\label{lemma:weak-stitching}
There exist a collection of policies $\{\beta_h\}$ and context distribution $p(h)$ such that, \emph{conditioned on a state}, the distribution of states and goals for the data collecting policies (training) is different from the distribution of states and goals (testing) for BC policy $\beta$.
\begin{equation}
    \label{eq-weak-stitching}
    \E_{p(h)}\left[p_+^{\beta_h}(s_{t+} \mid s) p_+^{\beta_h}(s)\right] \neq p_+^{\beta}(s_{t+} \mid s) p_+^{\beta}(s) \quad \text{ for some states $s, s_{t+}$.}
\end{equation}
\end{lemma}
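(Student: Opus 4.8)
The plan is to prove \cref{lemma:weak-stitching} by exhibiting the explicit counterexample depicted in \cref{fig:weak-stitching}: a small deterministic controlled Markov process on five states with actions ``up'' and ``right'', two deterministic context-conditioned policies $\beta_{h=1}$ and $\beta_{h=2}$, and the uniform prior $p(h=1)=p(h=2)=\tfrac12$, with any discount $\gamma\in(0,1)$. The construction is arranged around three structural properties: $\beta_{h=1}$ visits state $2$ with positive discounted occupancy but its trajectory never reaches state $4$; $\beta_{h=2}$ never visits state $2$; and the union of the two trajectories contains the edges $2\to 3$ and $3\to 4$, so that state $4$ is reachable from state $2$ only by recombining the two behaviours. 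I would then show that the left- and right-hand sides of \eqref{eq-weak-stitching} disagree at the pair $s=2$, $s_{t+}=4$.

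First I would evaluate the left-hand side at $(s,s_{t+})=(2,4)$. For $h=1$ the factor $p_+^{\beta_{h=1}}(s=2)$ is positive but $p_+^{\beta_{h=1}}(s_{t+}=4\mid s=2)=0$, since $\beta_{h=1}$ never reaches state $4$ from anywhere; for $h=2$ the factor $p_+^{\beta_{h=2}}(s=2)=0$. Hence every term of $\E_{p(h)}\big[p_+^{\beta_h}(s_{t+}=4\mid s=2)\,p_+^{\beta_h}(s=2)\big]$ vanishes and the left-hand side equals $0$. Next I would evaluate the right-hand side at the same pair, using the definition of the BC policy in \eqref{eq:policy-mixture}. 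The posterior $p^\beta(h\mid s)\propto p(h)\,p_+^{\beta_h}(s)$ assigns all its mass at state $2$ to $h=1$ (only context $1$ visits state $2$), so $\beta(\cdot\mid 2)=\beta_{h=1}(\cdot\mid 2)$ deterministically takes the edge $2\to 3$; at state $3$ both contexts have positive posterior, so $\beta(\cdot\mid 3)$ places positive mass on the edge $3\to 4$ inherited from the context that traverses it. Therefore, starting from state $2$, the Markovian policy $\beta$ has a positive-probability path $2\to 3\to 4$, giving $p_+^{\beta}(s_{t+}=4\mid s=2)>0$; and $p_+^{\beta}(s=2)>0$ because $\beta$ reproduces the action of $\beta_{h=1}$ on the way into state $2$. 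The product is strictly positive, which differs from the left-hand side value $0$, proving the lemma.

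The only point requiring genuine care --- and the main obstacle, such as it is --- is verifying that the Markovian policy $\beta$ constructed from \eqref{eq:policy-mixture} really does place positive mass on the stitched transitions at states $2$ and $3$; this reduces to checking the supports of the posteriors $p^\beta(h\mid s)$ at those states and confirming that the resulting mixed action distributions keep the edges $2\to 3$ and $3\to 4$ in their support. It is worth emphasising in the write-up why this does not contradict \cref{lemma:mixture}: that lemma only equates the \emph{unconditional} occupancies $p_+^\beta(s_{t+})=\E_{p(h)}[p_+^{\beta_h}(s_{t+})]$, whereas \eqref{eq-weak-stitching} concerns the state-\emph{conditioned} joints that OCBC actually samples from within a single context, and the entire content of \cref{lemma:weak-stitching} is that conditioning on the state can break this equality.
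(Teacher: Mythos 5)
Your proposal is correct and follows essentially the same route as the paper's own proof: the counterexample of \cref{fig:weak-stitching} with the uniform prior, evaluated at the pair $(s,s_{t+})=(2,4)$, where the context-conditioned (training) side vanishes while the BC-policy (testing) side is positive via the stitched path $2\to 3\to 4$. The only difference is cosmetic --- you argue positivity of the right-hand side qualitatively through the supports of $p^\beta(h\mid s)$ at states $2$ and $3$, whereas the paper computes the exact value $\tfrac{(1-\gamma)\gamma^2}{4}$ --- so no changes are needed.
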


\begin{proof}
    The proof is base on a simple counterexample, shown in ~\cref{fig:weak-stitching}. See the related caption for a sketch of proof and \cref{subsc:proof_weak_stitching} for the formal one.
\end{proof}

In summary, while the BC policy $\beta(a \mid s)$ will visit the same states as the mixture of data collecting policies \emph{on average}, conditioned on some state, the BC policy $\beta(a \mid s)$ may visit a different distribution of future states than the mixture of policies. Even if infinite data is collected from the data collecting policies, there can be pairs of states that will never be visited by any one data collecting policy in a single trajectory. The important implication of this negative result is that stitching requires the OCBC algorithm to recover a distribution over state-goal pairs $(\textcolor{violet}{\beta})$ which is different from the one it is trained on $(\textcolor{blue}{\beta_{h=1}}, \textcolor{red}{\beta_{h=2}})$. 

In theory, the training distribution has enough information to recover the state-goal distribution of the BC policy without the knowledge of the contexts of the data collecting policies. It is upto the algorithm to extract this information. Many RL methods can recover the test distribution implicitly and sidestep this negative result by doing dynamic programming (i.e., temporal difference learning). One way of viewing dynamic programming is that it considers all possible ways of stitching together trajectories, and selects the best among these stitched trajectories. But OCBC algorithms based on SL~\cite{chen2021decision, emmons2021rvs, ghosh2019learning, sun2019policy, schmidhuber2020reinforcement} can only have guarantees for iid generalisation~\cite{shai2014}. And in line with previous works studying other forms of combinatorial generalisation in SL~\cite{david2010, wiedemer2023compositional}, it is not clear apriori why these methods should have the combinatorial generalisation property, leading to the following hypothesis: \emph{Conditional imitation learning methods do not have the combinatorial generalisation property}.

We will test this hypothesis empirically in our experiments. In~\cref{app:spurious}, we discuss connections between stitching and spurious correlations.

\section{Temporal Augmentation Facilitates generalisation}
\label{sec:method}

The previous section allows us to rethink the oft-sought ``stitching'' property as a form of generalisation, and measure that generalisation in the same way we measure generalisation in SL: by measuring a difference in performance under two objectives. Casting stitching as a form of generalisation allows us to employ a standard tool from SL: data augmentation. When the computer vision expert wants a model that can generalize to random crops, they train their model on randomly-cropped images. Indeed, prior work has applied data augmentation to RL to achieve various notions of generalisation~\citep{stone2021distracting, Kirk_2023}. However, we use a different type of data augmentation to facilitate stitching. In this section, we describe a data augmentation approach that allows OCBC methods to improve their stitching capabilities.
\looseness=-1

\begin{wrapfigure}[18]{R}{0.5\textwidth}
\centering
\vspace{-3em}
\includegraphics[width=\linewidth]{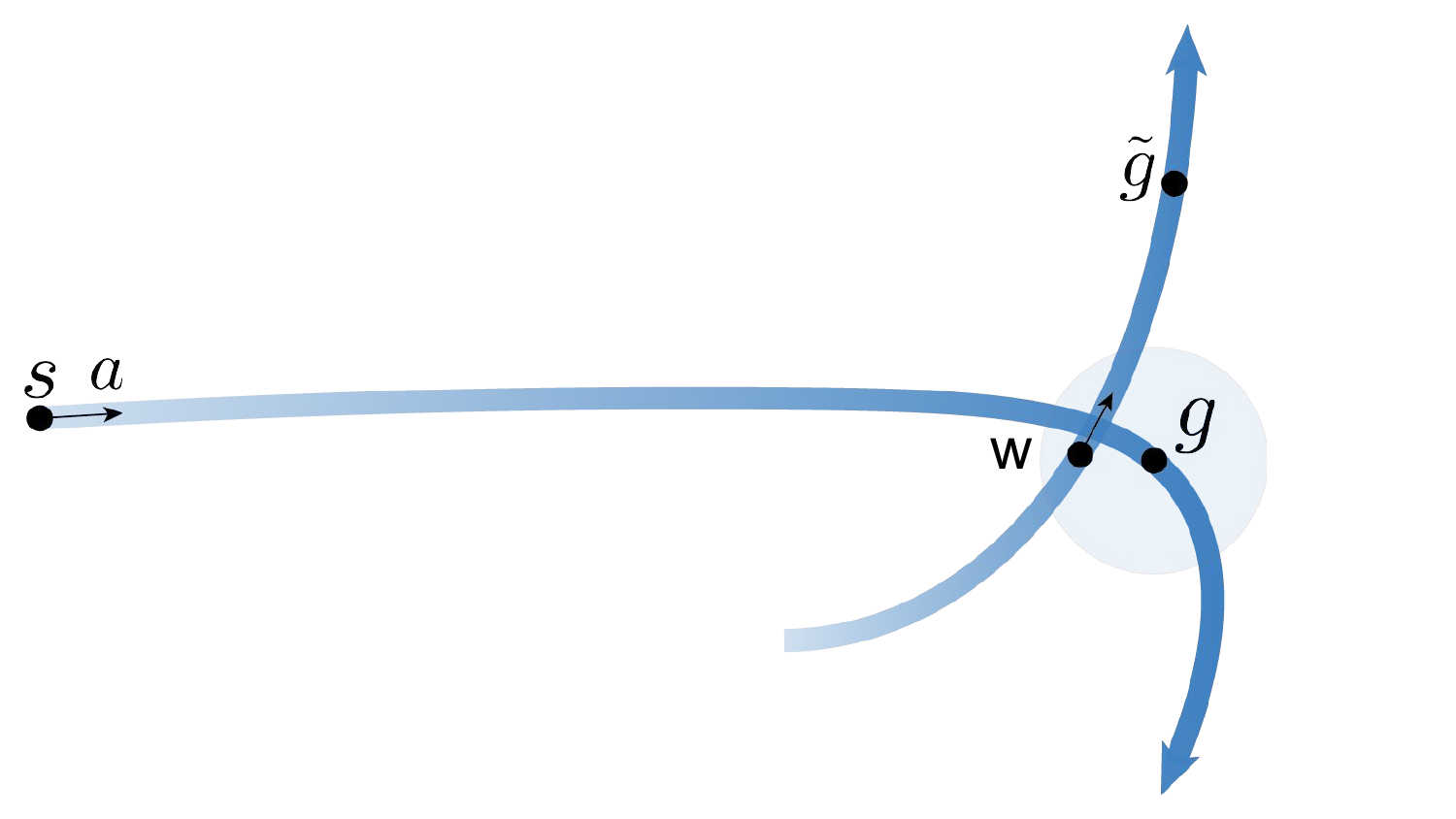}
\vspace{-1em}
\caption{\footnotesize \textsc{ Data augmentation for stitching:} After sampling an initial training example $(s, a, g)$~(\cref{eq:objective-in-terms-of-data-collection}), we look for a \textit{waypoint} state $w$ in the light blue region around the original goal $g$, and then sample a new \textit{augmented} goal $\tilde{g}$ from later in that trajectory. This is a simple approach to sample cross trajectory goals $\tilde{g}$ such that the action $a$ is still an optimal action at state~$s$.}
\label{fig:augmentation}
\end{wrapfigure}

Recall that OCBC policies are trained on $(s, a, g)$ triplets. To perform data augmentation, we will replace $g$ with a different goal $\tilde{g}$. To sample these new goals $\tilde{g}$, we first take the original goal $g$ and identify states from the offline dataset which are nearby to this goal (~\cref{sec:identifying}). Let $w$ denote one of these nearby ``waypoint'' states. Looking at the trajectory that contains $w$, the new goal $\tilde{g}$ is a random state that occurs after $w$ in this trajectory. We visualize this data augmentation in \cref{fig:augmentation}.

\paragraph{Identifying nearby states.}
\label{sec:identifying}
The problem of sampling nearby states can be solved by clustering all the states from the offline dataset before training. This assumes a distance metric in the state space. Using this distance metric, every state can be assigned a discrete label from $k$ different categories using a clustering algorithm~\citep{lloyd1982kmean, sherstov2005tiling}. Although finding a good distance metric is difficult in high-dimensional settings~\citep{aggarwal2002suprising}, our experiments show that using a simple L$2$ distance leads to significant improvement, even for image-based tasks.
\looseness=-1

\paragraph{Method summary.}
\Cref{alg:ours} summarizes how our data augmentation can be applied on top of existing OCBC algorithms. Given a method to group states, we can add our data-augmentation to existing OCBC algorithms in about~5 lines of code ({\color{blue}marked in blue}). In our experiments, we use the k-means algorithm~\citep{lloyd1982kmean}. To sample nearby waypoint states, we randomly sample a state from the same group (same cluster) as the original goal. The augmented goal is then sampled from the future of this waypoint (See ~\cref{fig:augmentation}).

\begin{figure}[h]
\vspace{-1em}
\begin{algorithm}[H]
    \footnotesize
    \caption{Outcome-conditioned behavioral cloning with (temporal) data augmentation. The key contribution of our paper is this form of data augmentation, which is shown in {\color{blue}blue text.}}\label{alg:ours}
    \begin{algorithmic}[1]
    \State \textbf{Input}: Dataset : $D = (\{ s_0, a_0, \dots \})$.
    \State Initialize OCBC policy $\pi_{\theta} (a | s, g)$ with parameters $\theta$.
    \State Set $\epsilon$ = augmentation probability, $m$ = mini-batch size. {\textcolor{blue}
    \State} \textcolor{blue}{ $(\{ d_0, d_1, \dots \})$ = CLUSTER($\{ s_0, s_1, \dots \}$). \Comment{Group all states in the dataset.}}
    \While{not converged}
        \For{$t = 1, \cdots, m$}
        \State Sample $(s_t,a_t,g_{t+}) \sim D$. \Comment{\Cref{eq:objective-in-terms-of-data-collection}}
        {\color{blue}
        \State With probability $\epsilon$ :
        \State \indent Get the group of the goal: $k = d_{t+}$. 
        \State \indent Sample waypoint states from the same group:  $w \sim \{s_i; \forall i \; \text{such that}\; d_i = k \}$ .
        \State \indent Sample augmented goal $\tilde{g}$ from the future of $w$, from the same trajectory as $w$.
        \State \indent Augment the goal $g_{t+} = \tilde{g}$.
        }
        \State Collect the loss $\Lagr_t(\theta) = -\log \pi_{\theta}(a_t \mid s_t, g_{t+})$.
        \EndFor
        \State Update $\theta$ using gradient descent on the mini-batch loss $\frac{1}{m}\sum_{t=1}^{m}\Lagr_t(\theta)$
    \EndWhile
    \State \textbf{Return : $\pi_{\theta} (a | s, g)$}
    \end{algorithmic}
\end{algorithm}
\vspace{-1em}
\end{figure}

\paragraph{Theoretical intuition on temporal data augmentation.} While data augmentations in general do not have exact theoretical guarantees, we can prove that temporal data augmentation, under certain smoothness assumptions, will generate additional state-goal pairs which may not be seen otherwise during training. In ~\cref{app:stitch}, we show that there exists a hierarchy of distributions with increasing stitching abilities, where $0$-step distribution corresponds to the train distribution~(\cref{eq-weak-stitching}, left) and the per-step distribution corresponds to the test distribution~(\cref{eq-weak-stitching}, right). We prove that applying temporal data augmentation once, samples state-goals from the $1$-step distribution.

\begin{lemma}
\label{lemma:data-aug-stitching} Under the smoothness assumptions mentioned in ~\cref{app:stitch}~(\cref{eq:smoothness}), for all $s,a$ pairs, temporal data augmentation $p^{\text{temp-aug}}(g \mid s,a)$ approximately samples goal according the distribution of one-step stitching policy ($p^{\text{1-step}}(g \mid s,a)$).
\end{lemma}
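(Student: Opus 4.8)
The plan is to unfold both $p^{\text{temp-aug}}(\cdot\mid s,a)$ and $p^{\text{1-step}}(\cdot\mid s,a)$ into nested expectations over the \emph{same} source of randomness, and then bound their discrepancy term-by-term using the smoothness assumption \cref{eq:smoothness}. First I would write $p^{\text{temp-aug}}(g\mid s,a)$ explicitly by following \cref{alg:ours}: draw a context $h\sim p(h\mid s,a)$ and an intermediate goal $g_0\sim p_+^{\beta_h}(s_{t+}=g_0\mid s,a)$ (this is exactly the original OCBC target of \cref{eq:objective-in-terms-of-data-collection}); draw a waypoint $w$ from the cluster $C(g_0)$ of $g_0$, which in the population limit means a state sampled from the data occupancy restricted to the ball $\{w : d(w,g_0)\le\delta\}$, $\delta$ being the cluster radius; finally draw $\tilde g$ from the future of $w$ along the trajectory containing $w$, which by the construction of the BC policy in \cref{eq:policy-mixture}/\cref{lemma:mixture} is a draw from $\sum_{h'} p^{\beta}(h'\mid w)\,p_+^{\beta_{h'}}(s_{t+}\mid w)$. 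Next I would recall from \cref{app:stitch} the definition of the one-step stitching distribution $p^{\text{1-step}}(g\mid s,a)$: it is the very same two-hop process, except the second hop is anchored at $g_0$ itself rather than at a nearby waypoint, i.e.\ $g_0\sim p_+^{\beta_h}(\cdot\mid s,a)$ followed by $g\sim\sum_{h'}p^{\beta}(h'\mid g_0)\,p_+^{\beta_{h'}}(s_{t+}\mid g_0)$.

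The only mismatch between the two processes is therefore the endpoint of the first hop: $w$ versus $g_0$, with $d(w,g_0)\le\delta$. I would then invoke \cref{eq:smoothness}, which supplies a Lipschitz-type control in the metric in which the hierarchy is stated (e.g.\ total variation) of the form $\big\| p_+^{\beta}(s_{t+}\mid w) - p_+^{\beta}(s_{t+}\mid g_0)\big\|\le L\,d(w,g_0)\le L\delta$, and likewise bounds the induced change in the context posterior $p^{\beta}(h'\mid\cdot)$. Pushing this bound through the outer expectations over $h$, $g_0$, and $w$ that are common to both processes, and applying the triangle inequality together with a data-processing inequality for the chosen divergence, yields $\big\|p^{\text{temp-aug}}(\cdot\mid s,a)-p^{\text{1-step}}(\cdot\mid s,a)\big\|\le L\delta$ uniformly in $(s,a)$, which is the asserted approximate equality; in the idealised limit $\delta\to 0$ the clusters shrink to points and the two distributions coincide exactly.

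The main obstacle I anticipate is making the waypoint step rigorous. The cluster $C(g_0)$ is a discrete, data-dependent object, so one must argue (i) that every member of a cluster lies within the assumed radius $\delta$ of $g_0$ — this is precisely where the clustering guarantee / \cref{eq:smoothness} enters — and (ii) that sampling $w$ proportionally to the \emph{dataset} occupancy inside the cluster and then replaying $w$'s recorded trajectory reproduces, up to the same $O(L\delta)$ error, the population object $\sum_{h'}p^{\beta}(h'\mid g_0)\,p_+^{\beta_{h'}}(s_{t+}\mid g_0)$ that defines $p^{\text{1-step}}$. This is a continuity-plus-coupling argument: once the $w\leftrightarrow g_0$ coupling is in place, the remaining estimates are routine bookkeeping with the triangle inequality. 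A secondary point to handle carefully is that augmentation fires only with probability $\epsilon<1$, so the statement should be read as a statement conditioned on the augmentation event; the effective training target is then the $\epsilon$-mixture of the $0$-step and $1$-step distributions, which is exactly consistent with the hierarchy established in \cref{app:stitch}.
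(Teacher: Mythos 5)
There is a genuine gap in the step where you invoke \cref{eq:smoothness}. Your decomposition keeps the originally sampled goal $g_0$ as the integration variable and charges all of the error to the \emph{second} hop: you claim the assumption gives a bound of the form $\lVert p_+^{\beta}(\,\cdot \mid w) - p_+^{\beta}(\,\cdot \mid g_0)\rVert \le L\, d(w,g_0)$, ``and likewise bounds the induced change in the context posterior $p^{\beta}(h' \mid \cdot)$.'' But \cref{eq:smoothness} says something different: it fixes the conditioning pair $(s,a)$ and asserts Lipschitzness in the \emph{future-state argument}, $|p_+^{\beta_h}(g \mid s,a) - p_+^{\beta_h}(w \mid s,a)| \le L \lVert g - w\rVert_2$, i.e.\ nearby states are (approximately) equally reachable \emph{from} $(s,a)$. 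It says nothing about how the occupancy measure varies as a function of its \emph{initial} state, and nothing at all about continuity of the context posterior $p(h \mid \cdot)$ --- the quantity $\sum_{\tilde h} p(\tilde h \mid w)\, p_+^{\beta_{\tilde h}}(g \mid w)$ can change abruptly between two nearby states $w$ and $g_0$ if, say, the two states are visited by different subsets of the data-collecting policies. So as written, the key inequality you push through the outer expectations does not follow from the stated hypothesis; it would require additional (and stronger) assumptions that the paper does not make.

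The paper's proof avoids this by charging the error to the \emph{first} hop instead: change variables so that the waypoint $w$ is the integration variable, and note that the density with which temporal augmentation lands on $w$ is, by \cref{eq:smoothness}, within $\epsilon L$ of $p_+^{\beta_h}(w \mid s,a)$, because $w$ and the originally sampled goal are at most the cutoff distance $\epsilon$ apart and both are future states evaluated under the \emph{same} conditioning $(s,a)$. The second hop is then anchored \emph{exactly} at $w$, so $\sum_{\tilde h} p(\tilde h \mid w)\, p_+^{\beta_{\tilde h}}(g \mid w)$ appears verbatim in the definition of $p^{\text{1-step}}(g \mid s,a)$ and contributes no error at all; integrating the $\pm\epsilon L$ perturbation then gives $p^{\text{temp-aug}}(g\mid s,a) = p^{\text{1-step}}(g\mid s,a) \pm \mathcal{O}(\epsilon L)$ without ever needing smoothness of $w \mapsto p_+^{\beta_{\tilde h}}(\,\cdot \mid w)$ or of $w \mapsto p(\tilde h \mid w)$ (the paper also restricts to nearest-neighbour clusters of at most two states so that the waypoint is within the cutoff of the sampled goal). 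Your overall architecture (two-hop unfolding plus a coupling of the two processes) is the right shape, and your observation about the augmentation probability yielding a mixture of the $0$-step and $1$-step distributions is fine but tangential; the fix is simply to move the Lipschitz argument from the second hop to the first, which is where the stated assumption actually has force.
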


Intuitively, the smoothness assumptions are required to ensure that nearby states have similar probabilities under the data collection policies. In~\cref{fig:augmentation} for example, this ensures taking action $a$ from state $s$ has similar probabilities of reaching nearby states $w$ and $g$. For the complete proof as well as more details see ~\cref{app:stitch}.

\begin{figure}[h]
\vspace{-0.5em}
\centering
    \begin{minipage}[t]{0.69\textwidth}
    \centering
    \includegraphics[width=\textwidth]{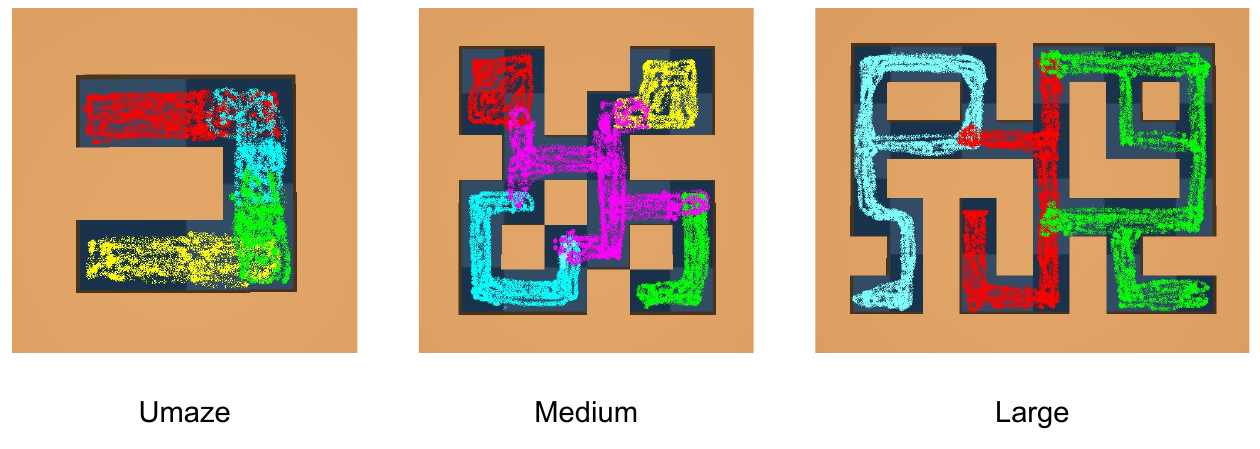}
    \caption{ \footnotesize \label{fig:point-weak-dataset} \textbf{Goal conditioned RL :} Different colors represent the navigation regions of different data collecting policies. During data collection, these policies navigate between random state-goal pairs chosen from their region of navigation. These visualisations are for the ``point'' mazes. The ``ant'' maze datasets are similar. Appendix~\cref{fig:ant-weak-dataset} shows the ``ant'' maze datasets. 
    }
    \end{minipage}
    \hfill
    \begin{minipage}[t]{0.27\textwidth}
    \centering
    \includegraphics[width=1.1\textwidth]{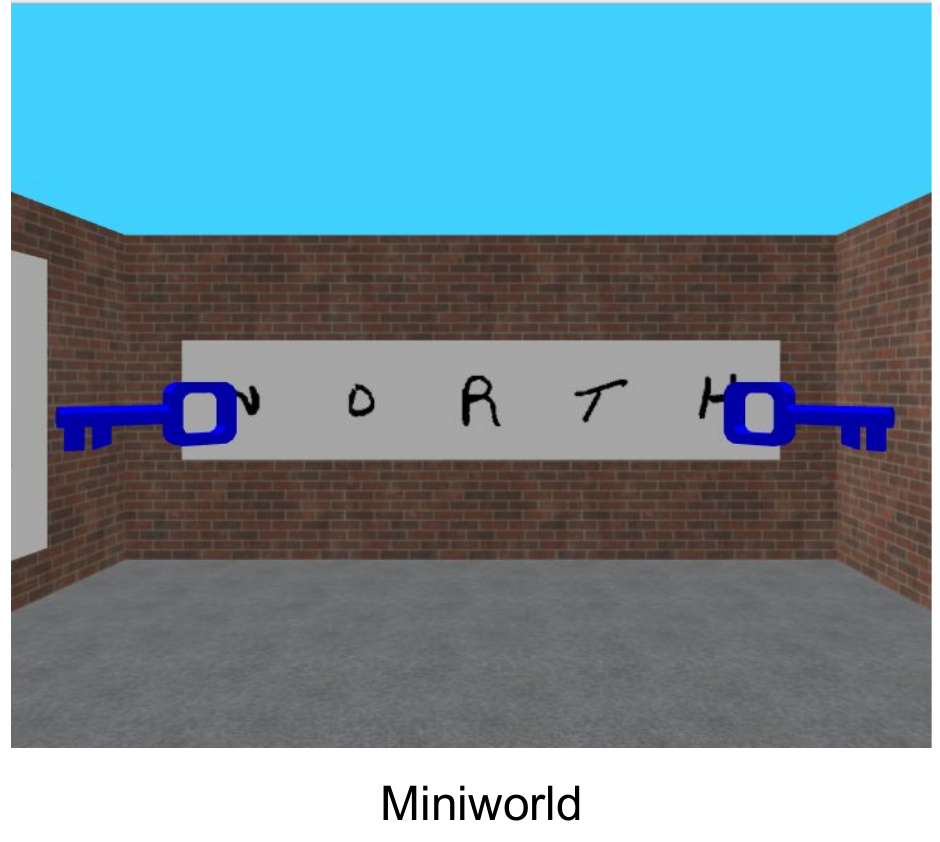}
    \caption{\footnotesize \label{fig:mini-world} \textbf{Return conditioned RL :} We visualise our new image based and partially observable environment created using Miniworld~\cite{MinigridMiniworld23}.}
    \end{minipage}
    \vspace{-0.5em}
\end{figure}

\section{Experiments}
\label{sec:experiments}

The experiments aim \emph{(1)} to verify our theoretical claim that OCBC methods do not always exhibit combinatorial generalisation, even with larger datasets or larger transformer models, and \emph{(2)} to evaluate how adding temporal augmentation to OCBC methods can improve stitching in both state-based and image-based tasks. All experiments are conducted across five random seeds.

\paragraph{OCBC methods.} RvS~\cite{emmons2021rvs} is an OCBC algorithm that uses a fully connected policy network and often achieves results better than TD-learning algorithms on various offline RL benchmarks~\cite{emmons2021rvs}. DT~\cite{chen2021decision} treats RL as a sequential SL problem and uses the transformer architecture as a policy. DT outputs an action, conditioning not only on the current state, but a history of states, actions and goals. See~\Cref{app:implementation} for implementation details.

\begin{figure}[h]
    \centering
    \includegraphics[width=\linewidth]{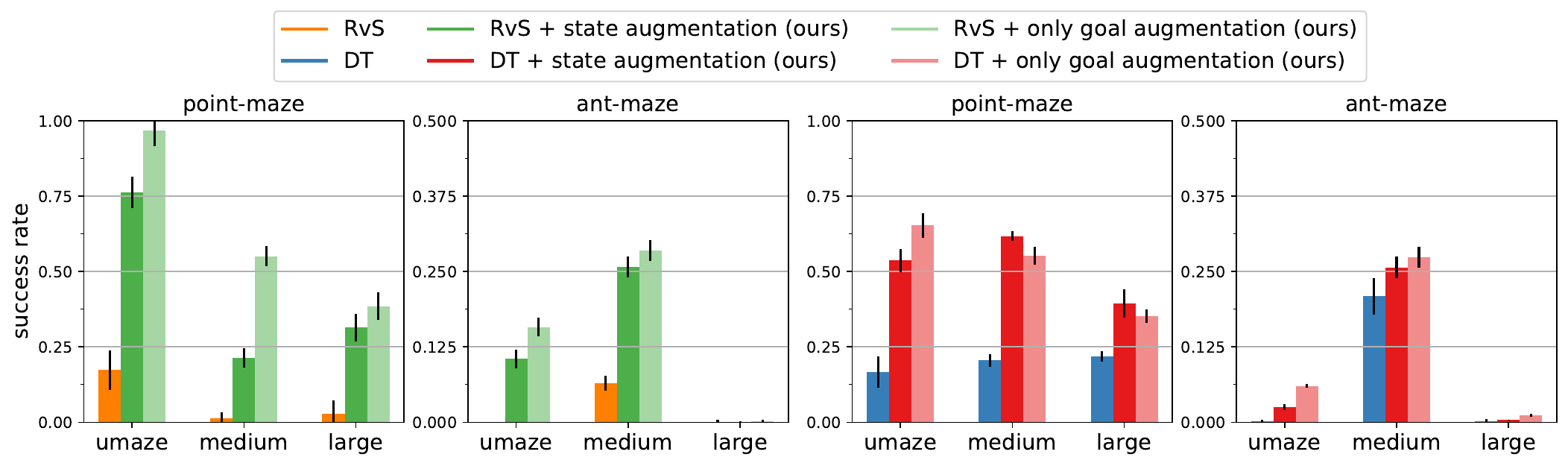}
    \vspace{-0.5em}
    \caption{ \footnotesize \label{fig:augment-ocbc} \textbf{Adding data augmentation outperforms the OCBC baselines on most tasks.}
    ``Only goal augmentation'' refers to an oracle version of our augmentation that uses privileged information ($x,y$ coordinates) when performing augmentation. Adding temporal data augmentation (both standard and oracle versions) improves the performance of both RvS and DT on $\nicefrac{5}{6}$ tasks.}
    \vspace{-1em}
\end{figure}

\subsection{Testing the ability of OCBC algorithms and temporal data augmentation to perform stitching.} While the maze datasets from D4RL~\citep{fu2021d4rl} were originally motivated to test the stitching capabilities of RL algorithms, we find that most test state-goal pairs are already in the training distribution. Thus, a good success rate on these datasets does not necessarily require stitching. This may explain why OCBC methods have achieved excellent results on these tasks~\citep{emmons2021rvs}, despite the fact that our theory suggests that these methods do not perform stitching. In our experiments, we collect new offline datasets that precisely test for stitching (see ~\cref{fig:point-weak-dataset} and ~\cref{fig:ant-weak-dataset} for visualisation). To collect our datasets, we use the same “point” and “ant” mazes (umaze, medium and large) from D4RL~\cite{fu2021d4rl}. To test for stitching, we condition OCBC policies to navigate between (state, goal) pairs previously unseen \emph{together}, and measure the success rate. In \cref{fig:point-weak-dataset}, this conditioning corresponds to (state, goal) pairs that appear in differently coloured regions. Each task consists of 2-6 randomly chosen (state, goal) pairs from different regions in the maze. In \cref{app:d4rl}, we discuss the important differences between the D4RL and our datasets, which are necessary to test for stitching.

\paragraph{Results.} In~\cref{fig:augment-ocbc}, we can see that both DT and RvS struggle to solve unseen tasks at test-time. However, applying temporal data-augmentation to RvS improves the goal-reaching success rate on $\nicefrac{5}{6}$ tasks, because the augmentation results in sampling (state, goal) pairs otherwise \emph{unseen together}. To show that temporal data augmentation can also be applied to only \textit{important} parts of the state, based on extra domain knowledge, we also compare an oracle version of our data augmentation. This oracle version uses only the $x,y$ coordinates from the state vector to apply the K-means algorithm. \Cref{fig:augment-ocbc} also shows that using extra domain knowledge can further improve performance.

\begin{figure}[h]
    \centering
    \includegraphics[width=\linewidth]{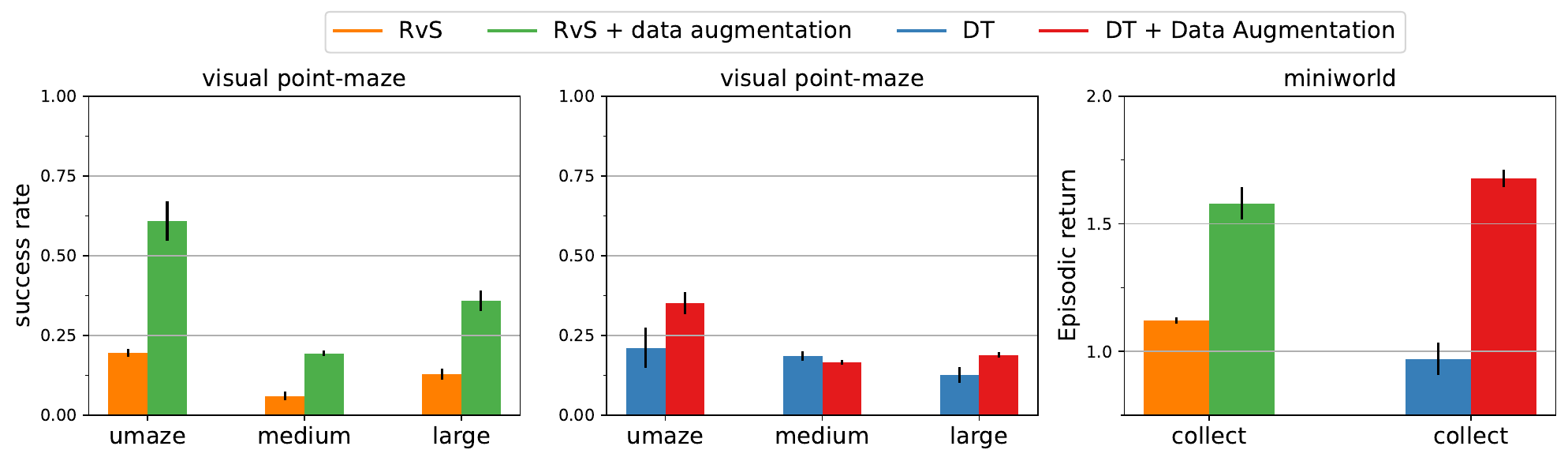}
    \caption{ \footnotesize \label{fig:pixel} 
    \textbf{Temporal data augmentation on image-based tasks.} It is difficult to find a reliable metric to apply temporal data augmentation in high-dimensional tasks. We show that using a simple L$2$ distance metric can surprisingly improve the combinatorial generalisation of OCBC algorithms on both goal-conditioned (left and center) and return-conditioned (right) tasks.}    
    \vspace{-1em}
\end{figure}

\subsection{Can temporal data augmentation work for high dimensional tasks?} As mentioned in \cref{sec:identifying}, it can be difficult to provide a good distance metric, especially for tasks with high-dimensional states. Although this is a limitation, we show that temporal data augmentation, using a simple L$2$ distance metric, can improve the combinatorial generalization of OCBC algorithms even on high-dimensional image-based tasks. To evaluate this, we use both image-based goal-conditioned and return-conditioned tasks. For the goal-conditioned tasks, we use an image-based version of the ``point'' mazes~\ref{fig:point-weak-dataset}. The agent is given a top-down view of the maze to infer its location. For the return conditioned tasks, we create a new task using Miniworld~\cite{MinigridMiniworld23} (See \cref{fig:mini-world}) called ``collect''. The task is to collect both the keys and return to the start position. A reward of $1$ is received after collecting each key. There are two data collecting policies, each collecting only one key. At test time, the OCBC policy is conditioned on the unseen return of $2$ (collect both keys). 

\paragraph{Results.} In \cref{fig:pixel}, we can see that temporal data augmentation improves the performance of RvS and DT on $\nicefrac{4}{4}$ and $\nicefrac{3}{4}$ tasks, respectively. Although temporal data augmentation can be successfully applied to some high dimensional tasks, it is not guaranteed to succeed~\ref{lemma:data-aug-stitching}. There remains room for other scalable and robust methods to achieve even better performance.

\begin{figure}[h]
\centering
    \vspace{-2em}
    \begin{minipage}[t]{0.48\textwidth}
    \centering
    \includegraphics[width=\linewidth]{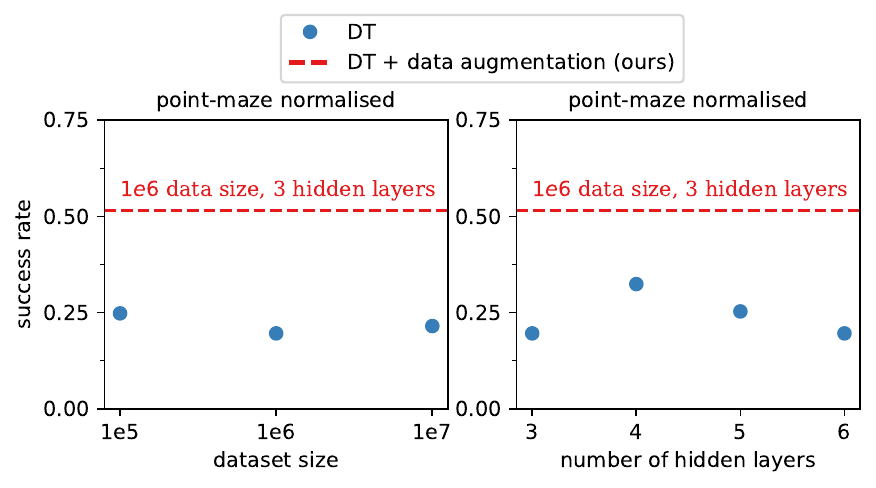}
    \caption{ \footnotesize \label{fig:ablation} Performance of DT trained on different offline dataset sizes (left) and using a different number of hidden layers (right) averaged across all ``point'' mazes. Even with larger datasets or models, the generalisation of DT is worse than DT~+ data augmentation.
    }
    \end{minipage}
    \hfill
    \begin{minipage}[t]{0.48\textwidth}
    \centering
    \includegraphics[width=1.05\textwidth]{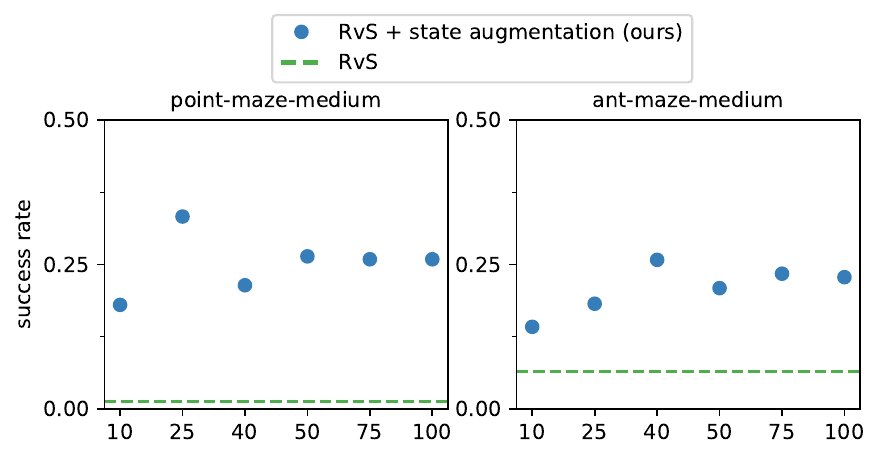}
    \caption{\footnotesize \label{fig:kmeans} Comparison of our data augmentation trained with different numbers of centroids in the K-means algorithm on ``point'' maze-medium and ``ant'' maze-medium. Temporal data augmentation corresponding to all values of $k$ outperforms DT on both tasks.
    \looseness=-1
    }
    \end{minipage}
    \vspace{-2em}
\end{figure}

\subsection{Ablation experiments.}

\paragraph{Does more data remove the need for augmentation?} Although our theory~(\cref{lemma:weak-stitching}) suggests that generalisation is required because of a change in distribution and is not a problem due to limited data, conventional wisdom says that larger datasets generally result in better generalisation. To empirically test whether this is the case, we train DT on $10$ million transitions (10 times more than \cref{fig:augment-ocbc}) on all ``point'' maze tasks. In \cref{fig:ablation} (left), we see that even with more data, the combinatorial generalisation of DT does not improve much. Lastly, scaling the size of transformer models~\citep{vaswani2023attention} is known to perform better in many SL problems. To understand whether this can have an effect on stitching capabilities, we increased the number of layers in the original DT model. In \cref{fig:ablation} (right), we can see that increasing the number of layers does not have an effect on DT's stitching capabilities.

\paragraph{How sensitive is temporal data augmentation to the number of centroids used for K-means?} In ~\cref{fig:kmeans}, we ablate the choice of the number of centroids used in K-means on two environments -- ``point'' maze-medium and ``ant'' maze-medium. All choices of centroids significantly outperform the RvS method on both tasks.

\paragraph{Combinatorial generalisation due to spurious relations.} In most of our experiments, OCBC algorithms do exhibit, albeit very low, combinatorial generalisation. We believe this occurs not due to the combinatorial generalisation ability of OCBC algorithms, but due to certain spurious relations that are present in the dataset. In~\cref{app:spurious}, we discuss the relation of combinatorial generalisation with spurious relations. In~\cref{app:didactic}, we perform didactic experiments to show that combinatorial generalisation in OCBC algorithms occurs because the OCBC policy network picks up on such spurious relations.

\section{Discussion}
In this work, we shed light on an area that the community has been investigating recently, \textit{can SL-based approaches perform stitching}. We formally show that stitching requires combinatorial generalisation, and recent SL approaches to RL (OCBC methods) generally do not have any guarantees to perform such generalisation. We empirically verify this on many state-based and image-based tasks. We also propose a type of temporal data augmentation to perform the desired type of combinatorial generalisation precisely and help bridge the gap between OCBC and temporal difference algorithms.

\paragraph{Limitations.} Our proposed augmentation assumes access to a local distance metric in the state space, which can be difficult to obtain in general. Lifting this assumption and developing scalable OCBC algorithms that generalise is a promising direction for future work.

Overall, our work hints that current SL approaches may not efficiently use sequential data found in RL : even when trained on vast quantities of data, these approaches do not perform combinatorial generalisation (stitching). Due to the temporal nature of RL, it is possible to solve a combinatorial number of tasks from the same sequential data. Similar gains in data efficiency can be made by designing algorithms capable of combinatorial generalisation in other problems involving time series data, for example, audio, videos, and text.

{\footnotesize
\paragraph{Acknowledgements.} This work was supported by Mila IDT, Compute Canada, and CIFAR. We thank Artem Zholus, Arnav Jain, and Tianwei Ni for reviewing an earlier draft of our paper. We thank Seohong Park and Mikail Khona for helpful pointers related to the code. We thank Siddarth Venkatraman and members of the Robotics and Embodied AI (REAL) Lab for fruitful discussions throughout the project.

% \newpage
% \bibliographystyle{plainnat}
\bibliographystyle{unsrt}
\bibliography{references}

\begin{thebibliography}{10}

\bibitem{schmidhuber2020reinforcement}
Juergen Schmidhuber.
\newblock Reinforcement learning upside down: Don't predict rewards -- just map
  them to actions, 2020.

\bibitem{chen2021decision}
Lili Chen, Kevin Lu, Aravind Rajeswaran, Kimin Lee, Aditya Grover, Misha
  Laskin, Pieter Abbeel, Aravind Srinivas, and Igor Mordatch.
\newblock Decision transformer: Reinforcement learning via sequence modeling.
\newblock {\em Advances in neural information processing systems},
  34:15084--15097, 2021.

\bibitem{emmons2021rvs}
Scott Emmons, Benjamin Eysenbach, Ilya Kostrikov, and Sergey Levine.
\newblock Rvs: What is essential for offline rl via supervised learning?
\newblock {\em arXiv preprint arXiv:2112.10751}, 2021.

\bibitem{lee2022multigame}
Kuang-Huei Lee, Ofir Nachum, Mengjiao Yang, Lisa Lee, Daniel Freeman, Winnie
  Xu, Sergio Guadarrama, Ian Fischer, Eric Jang, Henryk Michalewski, and Igor
  Mordatch.
\newblock Multi-game decision transformers, 2022.

\bibitem{ziebart2008maximum}
Brian~D Ziebart, Andrew~L Maas, J~Andrew Bagnell, Anind~K Dey, et~al.
\newblock Maximum entropy inverse reinforcement learning.
\newblock In {\em Aaai}, volume~8, pages 1433--1438. Chicago, IL, USA, 2008.

\bibitem{mnih2013playing}
Volodymyr Mnih, Koray Kavukcuoglu, David Silver, Alex Graves, Ioannis
  Antonoglou, Daan Wierstra, and Martin Riedmiller.
\newblock Playing atari with deep reinforcement learning.
\newblock {\em arXiv preprint arXiv:1312.5602}, 2013.

\bibitem{lillicrap2015continuous}
Timothy~P Lillicrap, Jonathan~J Hunt, Alexander Pritzel, Nicolas Heess, Tom
  Erez, Yuval Tassa, David Silver, and Daan Wierstra.
\newblock Continuous control with deep reinforcement learning.
\newblock {\em arXiv preprint arXiv:1509.02971}, 2015.

\bibitem{fujimoto2018addressing}
Scott Fujimoto, Herke Hoof, and David Meger.
\newblock Addressing function approximation error in actor-critic methods.
\newblock In {\em International conference on machine learning}, pages
  1587--1596. PMLR, 2018.

\bibitem{kostrikov2021offline}
Ilya Kostrikov, Ashvin Nair, and Sergey Levine.
\newblock Offline reinforcement learning with implicit q-learning.
\newblock {\em arXiv preprint arXiv:2110.06169}, 2021.

\bibitem{wiedemer2023compositional}
Thaddäus Wiedemer, Prasanna Mayilvahanan, Matthias Bethge, and Wieland
  Brendel.
\newblock Compositional generalization from first principles, 2023.

\bibitem{saparov2023language}
Abulhair Saparov and He~He.
\newblock Language models are greedy reasoners: A systematic formal analysis of
  chain-of-thought, 2023.

\bibitem{zhang2023unveiling}
Yi~Zhang, Arturs Backurs, Sébastien Bubeck, Ronen Eldan, Suriya Gunasekar, and
  Tal Wagner.
\newblock Unveiling transformers with lego: a synthetic reasoning task, 2023.

\bibitem{nye2021work}
Maxwell Nye, Anders~Johan Andreassen, Guy Gur-Ari, Henryk Michalewski, Jacob
  Austin, David Bieber, David Dohan, Aitor Lewkowycz, Maarten Bosma, David
  Luan, Charles Sutton, and Augustus Odena.
\newblock Show your work: Scratchpads for intermediate computation with
  language models, 2021.

\bibitem{perez2017effectiveness}
Luis Perez and Jason Wang.
\newblock The effectiveness of data augmentation in image classification using
  deep learning, 2017.

\bibitem{fu2021d4rl}
Justin Fu, Aviral Kumar, Ofir Nachum, George Tucker, and Sergey Levine.
\newblock D4rl: Datasets for deep data-driven reinforcement learning, 2021.

\bibitem{brandfonbrener2023does}
David Brandfonbrener, Alberto Bietti, Jacob Buckman, Romain Laroche, and Joan
  Bruna.
\newblock When does return-conditioned supervised learning work for offline
  reinforcement learning?
\newblock In Alice~H. Oh, Alekh Agarwal, Danielle Belgrave, and Kyunghyun Cho,
  editors, {\em Advances in Neural Information Processing Systems}, 2022.

\bibitem{zhang2018dissection}
Amy Zhang, Nicolas Ballas, and Joelle Pineau.
\newblock A dissection of overfitting and generalization in continuous
  reinforcement learning, 2018.

\bibitem{cobbe2019quantifying}
Karl Cobbe, Oleg Klimov, Chris Hesse, Taehoon Kim, and John Schulman.
\newblock Quantifying generalization in reinforcement learning, 2019.

\bibitem{young2023benefits}
Kenny Young, Aditya Ramesh, Louis Kirsch, and Jürgen Schmidhuber.
\newblock The benefits of model-based generalization in reinforcement learning,
  2023.

\bibitem{NEURIPS2021_099fe6b0}
Michael Janner, Qiyang Li, and Sergey Levine.
\newblock Offline reinforcement learning as one big sequence modeling problem.
\newblock In M.~Ranzato, A.~Beygelzimer, Y.~Dauphin, P.S. Liang, and J.~Wortman
  Vaughan, editors, {\em Advances in Neural Information Processing Systems},
  volume~34, pages 1273--1286. Curran Associates, Inc., 2021.

\bibitem{zhang2021learning}
Amy Zhang, Rowan McAllister, Roberto Calandra, Yarin Gal, and Sergey Levine.
\newblock Learning invariant representations for reinforcement learning without
  reconstruction, 2021.

\bibitem{bharadhwaj2022information}
Homanga Bharadhwaj, Mohammad Babaeizadeh, Dumitru Erhan, and Sergey Levine.
\newblock Information prioritization through empowerment in visual model-based
  rl, 2022.

\bibitem{igl2019generalisation}
Maximilian Igl, Kamil Ciosek, Yingzhen Li, Sebastian Tschiatschek, Cheng Zhang,
  Sam Devlin, and Katja Hofmann.
\newblock Generalization in reinforcement learning with selective noise
  injection and information bottleneck, 2019.

\bibitem{morimoto2000robust}
Jun Morimoto and Kenji Doya.
\newblock Robust reinforcement learning.
\newblock In T.~Leen, T.~Dietterich, and V.~Tresp, editors, {\em Advances in
  Neural Information Processing Systems}, volume~13. MIT Press, 2000.

\bibitem{tessler2019action}
Chen Tessler, Yonathan Efroni, and Shie Mannor.
\newblock Action robust reinforcement learning and applications in continuous
  control, 2019.

\bibitem{eysenbach2021robust}
Benjamin Eysenbach, Ruslan Salakhutdinov, and Sergey Levine.
\newblock Robust predictable control, 2021.

\bibitem{yarats2022dont}
Denis Yarats, David Brandfonbrener, Hao Liu, Michael Laskin, Pieter Abbeel,
  Alessandro Lazaric, and Lerrel Pinto.
\newblock Don't change the algorithm, change the data: Exploratory data for
  offline reinforcement learning, 2022.

\bibitem{zhou2022real}
Gaoyue Zhou, Liyiming Ke, Siddhartha Srinivasa, Abhinav Gupta, Aravind
  Rajeswaran, and Vikash Kumar.
\newblock Real world offline reinforcement learning with realistic data source,
  2022.

\bibitem{zhu2023pearl}
Zheqing Zhu, Rodrigo de~Salvo~Braz, Jalaj Bhandari, Daniel Jiang, Yi~Wan,
  Yonathan Efroni, Liyuan Wang, Ruiyang Xu, Hongbo Guo, Alex Nikulkov, Dmytro
  Korenkevych, Urun Dogan, Frank Cheng, Zheng Wu, and Wanqiao Xu.
\newblock Pearl: A production-ready reinforcement learning agent, 2023.

\bibitem{kurenkov2022showing}
Vladislav Kurenkov and Sergey Kolesnikov.
\newblock Showing your offline reinforcement learning work: Online evaluation
  budget matters, 2022.

\bibitem{stone2021distracting}
Austin Stone, Oscar Ramirez, Kurt Konolige, and Rico Jonschkowski.
\newblock The distracting control suite -- a challenging benchmark for
  reinforcement learning from pixels, 2021.

\bibitem{kalashnikov2021mtopt}
Dmitry Kalashnikov, Jacob Varley, Yevgen Chebotar, Benjamin Swanson, Rico
  Jonschkowski, Chelsea Finn, Sergey Levine, and Karol Hausman.
\newblock Mt-opt: Continuous multi-task robotic reinforcement learning at
  scale, 2021.

\bibitem{srinivas2020curl}
Aravind Srinivas, Michael Laskin, and Pieter Abbeel.
\newblock Curl: Contrastive unsupervised representations for reinforcement
  learning, 2020.

\bibitem{hansen2021generalisation}
Nicklas Hansen and Xiaolong Wang.
\newblock Generalization in reinforcement learning by soft data augmentation,
  2021.

\bibitem{kostrikov2021image}
Ilya Kostrikov, Denis Yarats, and Rob Fergus.
\newblock Image augmentation is all you need: Regularizing deep reinforcement
  learning from pixels, 2021.

\bibitem{yarats2021mastering}
Denis Yarats, Rob Fergus, Alessandro Lazaric, and Lerrel Pinto.
\newblock Mastering visual continuous control: Improved data-augmented
  reinforcement learning, 2021.

\bibitem{lu2020sampleefficient}
Chaochao Lu, Biwei Huang, Ke~Wang, José~Miguel Hernández-Lobato, Kun Zhang,
  and Bernhard Schölkopf.
\newblock Sample-efficient reinforcement learning via counterfactual-based data
  augmentation, 2020.

\bibitem{Shorten2019aso}
Connor Shorten and Taghi~M. Khoshgoftaar.
\newblock A survey on image data augmentation for deep learning.
\newblock {\em Journal of Big Data}, 6:1--48, 2019.

\bibitem{yamagata2023qlearning}
Taku Yamagata, Ahmed Khalil, and Raul Santos-Rodriguez.
\newblock Q-learning decision transformer: Leveraging dynamic programming for
  conditional sequence modelling in offline {RL}, 2023.

\bibitem{paster2023return}
Keiran Paster, Silviu Pitis, Sheila~A. McIlraith, and Jimmy Ba.
\newblock Return augmentation gives supervised {RL} temporal compositionality,
  2023.

\bibitem{char2022bats}
Ian Char, Viraj Mehta, Adam Villaflor, John~M. Dolan, and Jeff Schneider.
\newblock Bats: Best action trajectory stitching, 2022.

\bibitem{ziebart2010modeling}
Brian~D Ziebart.
\newblock {\em Modeling purposeful adaptive behavior with the principle of
  maximum causal entropy}.
\newblock Carnegie Mellon University, 2010.

\bibitem{feinberg2012handbook}
Eugene~A Feinberg and Adam Shwartz.
\newblock {\em Handbook of Markov decision processes: methods and
  applications}, volume~40.
\newblock Springer Science \& Business Media, 2012.

\bibitem{eysenbach2022imitating}
Benjamin Eysenbach, Soumith Udatha, Sergey Levine, and Ruslan Salakhutdinov.
\newblock Imitating past successes can be very suboptimal.
\newblock {\em arXiv preprint arXiv:2206.03378}, 2022.

\bibitem{ghosh2019learning}
Dibya Ghosh, Abhishek Gupta, Ashwin Reddy, Justin Fu, Coline Devin, Benjamin
  Eysenbach, and Sergey Levine.
\newblock Learning to reach goals via iterated supervised learning.
\newblock {\em arXiv preprint arXiv:1912.06088}, 2019.

\bibitem{sun2019policy}
Hao Sun, Zhizhong Li, Xiaotong Liu, Bolei Zhou, and Dahua Lin.
\newblock Policy continuation with hindsight inverse dynamics.
\newblock {\em Advances in Neural Information Processing Systems}, 32, 2019.

\bibitem{kumar2019rewardconditioned}
Aviral Kumar, Xue~Bin Peng, and Sergey Levine.
\newblock Reward-conditioned policies, 2019.

\bibitem{janner2021offline}
Michael Janner, Qiyang Li, and Sergey Levine.
\newblock Offline reinforcement learning as one big sequence modeling problem.
\newblock {\em Advances in neural information processing systems},
  34:1273--1286, 2021.

\bibitem{vankov2019training}
Ivan Vankov and Jeffrey Bowers.
\newblock Training neural networks to encode symbols enables combinatorial
  generalization, 2019.

\bibitem{hansenestruch2022bisimulation}
Philippe Hansen-Estruch, Amy Zhang, Ashvin Nair, Patrick Yin, and Sergey
  Levine.
\newblock Bisimulation makes analogies in goal-conditioned reinforcement
  learning, 2022.

\bibitem{Kirk_2023}
Robert Kirk, Amy Zhang, Edward Grefenstette, and Tim Rocktäschel.
\newblock A survey of zero-shot generalisation in deep reinforcement learning.
\newblock {\em Journal of Artificial Intelligence Research}, 76:201--264, jan
  2023.

\bibitem{shai2014}
Shai Shalev-Shwartz and Shai Ben-David.
\newblock {\em Understanding Machine Learning - From Theory to Algorithms.}
\newblock Cambridge University Press, 2014.

\bibitem{david2010}
Shai Ben-David, John Blitzer, Koby Crammer, Alex Kulesza, Fernando Pereira, and
  Jennifer~Wortman Vaughan.
\newblock A theory of learning from different domains.
\newblock {\em Mach. Learn.}, 79(1–2):151–175, may 2010.

\bibitem{lloyd1982kmean}
S.~Lloyd.
\newblock Least squares quantization in pcm.
\newblock {\em IEEE Transactions on Information Theory}, 28(2):129--137, 1982.

\bibitem{sherstov2005tiling}
Alexander~A. Sherstov and Peter Stone.
\newblock Function approximation via tile coding: Automating parameter choice.
\newblock In Jean-Daniel Zucker and Lorenza Saitta, editors, {\em Abstraction,
  Reformulation and Approximation}, pages 194--205, Berlin, Heidelberg, 2005.
  Springer Berlin Heidelberg.

\bibitem{aggarwal2002suprising}
Charu Aggarwal, Alexander Hinneburg, and Daniel Keim.
\newblock On the surprising behavior of distance metric in high-dimensional
  space.
\newblock {\em First publ. in: Database theory, ICDT 200, 8th International
  Conference, London, UK, January 4 - 6, 2001 / Jan Van den Bussche ... (eds.).
  Berlin: Springer, 2001, pp. 420-434 (=Lecture notes in computer science ;
  1973)}, 02 2002.

\bibitem{MinigridMiniworld23}
Maxime Chevalier-Boisvert, Bolun Dai, Mark Towers, Rodrigo de~Lazcano, Lucas
  Willems, Salem Lahlou, Suman Pal, Pablo~Samuel Castro, and Jordan Terry.
\newblock Minigrid \& miniworld: Modular \& customizable reinforcement learning
  environments for goal-oriented tasks.
\newblock {\em CoRR}, abs/2306.13831, 2023.

\bibitem{vaswani2023attention}
Ashish Vaswani, Noam Shazeer, Niki Parmar, Jakob Uszkoreit, Llion Jones,
  Aidan~N Gomez, \L~ukasz Kaiser, and Illia Polosukhin.
\newblock Attention is all you need.
\newblock In I.~Guyon, U.~Von Luxburg, S.~Bengio, H.~Wallach, R.~Fergus,
  S.~Vishwanathan, and R.~Garnett, editors, {\em Advances in Neural Information
  Processing Systems}, volume~30. Curran Associates, Inc., 2017.

\bibitem{dasgupta2022distinguishing}
Ishita Dasgupta, Erin Grant, and Thomas~L. Griffiths.
\newblock Distinguishing rule- and exemplar-based generalization in learning
  systems, 2022.

\bibitem{sagawa2019distributionally}
Shiori Sagawa, Pang~Wei Koh, Tatsunori~B Hashimoto, and Percy Liang.
\newblock Distributionally robust neural networks for group shifts: On the
  importance of regularization for worst-case generalization.
\newblock {\em arXiv preprint arXiv:1911.08731}, 2019.

\bibitem{towers_gymnasium_2023}
Mark Towers, Jordan~K. Terry, Ariel Kwiatkowski, John~U. Balis, Gianluca~de
  Cola, Tristan Deleu, Manuel Goulão, Andreas Kallinteris, Arjun KG, Markus
  Krimmel, Rodrigo Perez-Vicente, Andrea Pierré, Sander Schulhoff, Jun~Jet
  Tai, Andrew Tan~Jin Shen, and Omar~G. Younis.
\newblock Gymnasium, March 2023.

\bibitem{scikit-learn}
F.~Pedregosa, G.~Varoquaux, A.~Gramfort, V.~Michel, B.~Thirion, O.~Grisel,
  M.~Blondel, P.~Prettenhofer, R.~Weiss, V.~Dubourg, J.~Vanderplas, A.~Passos,
  D.~Cournapeau, M.~Brucher, M.~Perrot, and E.~Duchesnay.
\newblock Scikit-learn: Machine learning in {P}ython.
\newblock {\em Journal of Machine Learning Research}, 12:2825--2830, 2011.

\bibitem{hinderer2005}
K.~Hinderer.
\newblock Lipschitz continuity of value functions in markovian decision
  processes.
\newblock {\em Mathematical Methods of Operations Research}, 62:3--22, 09 2005.

\bibitem{gelada2019deepmdp}
Carles Gelada, Saurabh Kumar, Jacob Buckman, Ofir Nachum, and Marc~G.
  Bellemare.
\newblock Deepmdp: Learning continuous latent space models for representation
  learning, 2019.

\bibitem{rachelson2010}
Emmanuel Rachelson and Michail Lagoudakis.
\newblock On the locality of action domination in sequential decision making.
\newblock In {\em Advances in Neural Information Processing Systems}, 01 2010.

\bibitem{alfred1997}
Alfred M\"{u}ller.
\newblock How does the value function of a markov decision process depend on
  the transition probabilities?
\newblock {\em Math. Oper. Res.}, 22(4):872–885, nov 1997.

\end{thebibliography}

}

\clearpage
\appendix

\begin{wrapfigure}{R}{0.4\textwidth}
\vspace{-2.0em}
\centering
\includegraphics[width=\linewidth]{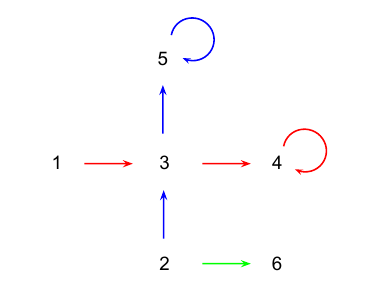}
\caption{\footnotesize \textsc{Spurious Correlations:}
{To understand how spurious correlations are related to stitching, let's look at this simple determinisitc MDP in which three data collecting policies (red, blue and green) collect the offline dataset. Note that an OCBC algorithm which ignores the state, can also achieve a zero training loss~\cref{eq:objective} on this offline dataset. Whenever $4$ is the desired goal in the dataset, action right is always optimal irrespective of the current state. Any SL algorithm that learns a minimal decision rule will in fact learn to ignore the state to reduce the training loss to zero in this case~\cite{dasgupta2022distinguishing}. But during test time, starting at state $2$ and conditioned on goal $4$ such an SL algorithm will ignore the current state and move towards right which is clearly suboptimal.}}
\label{fig:spurious}
\end{wrapfigure}

\section{Relationship with Spurious Correlations.}
\label{app:spurious}
Handling stitching is somewhat akin to handling spurious correlations studied in the SL community. In the RL setting, we want to navigate from A to B given a dataset that contains some trajectories with A and some with B but none with both A and B. This is somewhat analogous to common settings in object detection in computer vision, where the object background is highly indicative of the object class. For example, the common waterbirds dataset~\citep{sagawa2019distributionally} aims to classify images of birds into two classes, ``land birds'' and ``water birds,'' but the image backgrounds are correlated with the class: water birds are usually depicted on top of a background with water. For evaluation, the classifier is shown an image of a ``water bird'' on top of a land background (and vice versa). Similar to the RL setting, SL evaluation is done using pairs of inputs that are rarely seen together during training. However, whereas the SL setting aims to learn a classifier that ignores certain aspects of the input, the RL setting is different because the aim is to learn a policy that can reason about both inputs.

There is another connection between the RL setting and spurious correlations, a connection that makes the RL setting look the opposite of the SL setting. For some goal-conditioned RL datasets, the current state is sufficient for predicting which action leads to the goal -- the policy does not need to look at the goal. In other datasets, the goal is sufficient for predicting the correct action. However, for navigating between pairs of states unseen together in the dataset, a policy must look at both the state and the goal inputs. 

\section{didactic experiments.}
\label{app:didactic}

\begin{figure}[h]
\centering
    \begin{minipage}[t]{0.48\textwidth}
    \centering
    \includegraphics[width=1\textwidth]{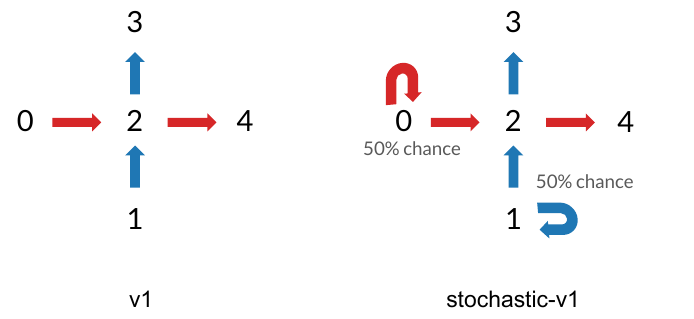}
    \caption{\footnotesize \label{fig:didactic-v1} MDP with 5 states and 2 actions (up and right). All episodes end after taking two actions. Data is collected using two policies (red and blue). The only difference between v$1$ and v$1$-stochastic, is the data collecting policies are stochastic at states $0$ and $1$. The stitching task is to navigate from states $1$ to $4$.}
    \end{minipage}
    \hfill
    \begin{minipage}[t]{0.48\textwidth}
    \centering
    \includegraphics[width=1.15\textwidth]{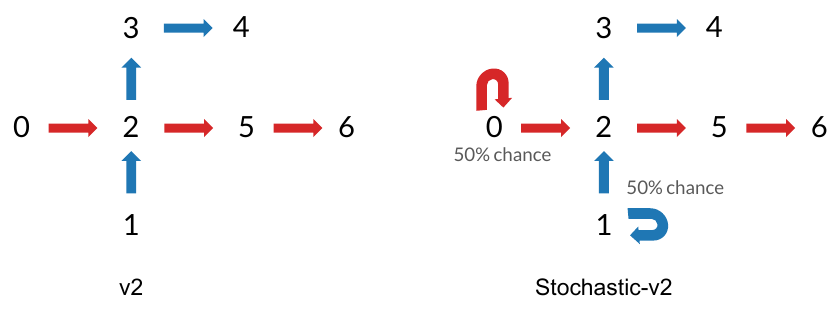}
    \caption{\footnotesize \label{fig:didactic-v2}MDP with 7 states and 2 actions (up and right). All episodes end after taking three actions. Data is collected using two policies (red and blue). The only difference between v$2$ and v$2$-stochastic, is the data collecting policies are stochastic at states $0$ and $1$. The stitching task is to navigate from states $0$ to $4$.}
    \end{minipage}
\end{figure}

\paragraph{Can OCBC algorithms exhibit combinatorial generalisation?}
\label{sec:didactic}
In~\cref{fig:augment-ocbc}, we can see that OCBC algorithms do not always achieve zero performance. To understand why OCBC algorithms seldom perform some amount of stitching, we use two offline datasets collected from a simple didactic MDP (see~\cref{fig:didactic-v1}). In v$1$ , $\nicefrac{3}{10}$ random seeds of DT are successfully able to navigate from states $1$ to $4$, while $\nicefrac{7}{10}$ fail. We show that in the both cases, the DT model picks up on one of two spurious relations present in the dataset. In the \emph{success} seeds, the DT model learns to output action $\text{up}$ and ignore the goal, whenever it is in state $1$. In the \emph{failure} seeds, the DT model learns to output action $\text{right}$, and ignore the state, whenever the goal is $4$. In stochastic-v$1$, we deliberately remove the spurious relation that leads to success --  the model can no longer ignore the goal, as the optimal action depends on it, i.e., if the goal is $1$, then the optimal action is $\text{right}$ , and if the goal is $3$, the optimal action is $\text{up}$. In stochastic-v$1$, we see that all $10$ seeds of DT fail. To be certain that models ignore either states or goals, we check that the model's outputs remain invariant them. This affirms the hypothesis that OCBC algorithms can succeed if the spurious relations in the offline datasets set them up for success. 

Similar to the above experiment, we perform more experiments on two offline datasets v$2$ and stochastic-v$2$ (see~\cref{fig:didactic-v2}). In v$2$, $10/10$ random seeds of DT are able to successfully navigate from states $0$ to $4$. The DT model in all cases picks up on the only spurious relation present in v$2$ -- the optimal action depends only on the current state and the goal can be ignored. We ensure that the model actually ignores the goal at state $0$, by checking that its outputs remain invariant to changing goals. In stochastic-v$2$, we remove this spurious relation; the optimal action at state $0$ depends on the goal as well, i.e., if the goal is $0$, then the optimal action is $\text{up}$, and if the goal is $6$, the optimal action is ${\text{right}}$. After removing the \emph{success} spurious relation, we observe that the success of DT drops to $0/10$ seeds. This result also algins with the same hypothesis that OCBC algorithms can succeed if the spurious relations in the offline datasets set them up for success at test time.

\section{Experimental Details}
\subsection{Environments}
\paragraph{Goal conditioned environments.} We use the ``point'' and ``ant'' mazes (umaze, medium and large) from D4RL~\citep{fu2021d4rl}. As discussed in~\cref{sec:experiments}, we carefully collect our new offline datasets to test for stitching combinatorial (see ~\cref{fig:point-weak-dataset} for visualisation). In the ``point" maze, the task is to navigate a ball with 2 degrees of freedom that is force-actuated in the cartesian directions x and y. In the ``ant" maze task, the agent is a 3-d ant from Farama Foundation~\citep{towers_gymnasium_2023}. To collect data for the ``point'' maze, we use a PID controller. To collect data for the ``ant'' maze, we use the same pre-trained policy from D4RL~\citep{fu2021d4rl}. In~\cref{fig:ant-weak-dataset}, we provide a visualisation of the offline dataset in all ``ant'' mazes. 

\paragraph{Return conditioned environments.}For the return conditioned tasks, we create a new task using Miniworld~\cite{MinigridMiniworld23} (See Fig. 4). The task is to collect both the keys and return to the start position. A reward of 1 is received after collecting each
key. This task is image based and partially observable. The agent recieves a first person view of the world. At any time, it can choose amongst 5 actions -- \{forward, backward, turn right, turn left, pickup\}. There are two data collecting policies, each collecting only one of the key. These data collection policies are implemented by controlling the agent manually.

\subsection{Implementation Details}
\label{app:implementation}
In this section we provide all the implementation details as well as hyper-parameters used for all the algorithms in our experiments -- DT, RvS, and RvS + temporal data augmentation.

\paragraph{DT.} 
We used the exact same hyper-parameters that the original DT paper~\citep{chen2021decision} used for their mujoco experiments. The original DT paper~\citep{chen2021decision} conditioned the transformer on future returns rather than future goals. For our experiments, we switch this conditioning to goals instead. At every time-step the transformer takes in as input the previous action, current state, and desired goal. The desired goal remains constant throughout the episode, but is still fed to the transformer at every timestep. All hyperparameters used for DT are mentioned in~\cref{table:dt-hyperparameters}.

\paragraph{RvS.} RvS is an OCBC algorithm which uses a fully connected neural network policy. We use the hyperparameters as prescribed by the original paper~\citep{emmons2021rvs}. All hyperparameters used for RvS are mentioned in~\cref{table:rvs-hyperparameters}. 

\paragraph{RvS~+ temporal data augmentation.} As mentioned in~\cref{alg:ours}, given a method to cluster states together, it only requires 5 lines of code to add the temporal data augmentation on top of an OCBC method. We use the k-means algorithm from scikit-learn~\citep{scikit-learn} with the default parameters to group states together. Adding data augmentation on top of RvS introduces 2 extra hyperparameters, which we mention in~\cref{table:ours-hyperparameters}. We \emph{do not} tune both of these hyperparameters in our paper. Nevertheless, we do ablate the choice of $K$ in k-means. 

\begin{table}[H]
    \centering
    \footnotesize 
    \caption{Hyperparameters for DT.}
    \label{table:dt-hyperparameters}
    \begin{tabular}{p{7cm}|p{5.5cm}} \toprule
        hyperparameter & value \\ \midrule 
        training steps & \num{2e5} \\
        batch size & 256 \\
        context len & 5\\
        optimizer & AdamW \\
        learning rate & \num{1e-3} \\
        warmup steps & $5000$ \\
        weight decay & \num{1e-4} \\
        dropout & 0.1 \\
        hidden layers (self attention layers) & 3\\
        embedding dimension & 128\\
        number of attention heads & 1\\
        \bottomrule
    \end{tabular}
\end{table}

\begin{table}[H]
    \centering
    \footnotesize 
    \caption{Hyperparameters for RvS.}
    \label{table:rvs-hyperparameters}
    \begin{tabular}{p{7cm}|p{5.5cm}} \toprule
        hyperparameter & value \\ \midrule 
        training steps & \num{1e6} \\
        batch size & 256 \\
        optimizer & Adam \\
        learning rate & \num{1e-3} \\
        hidden layers & 2\\
        hidden layer dimension & 1024\\
        \bottomrule
    \end{tabular}
\end{table}

\begin{table}[H]
    \centering
    \footnotesize 
    \caption{Hyperparameters for temporal data augmentation.}
    \label{table:ours-hyperparameters}
    \begin{tabular}{p{7cm}|p{5.5cm}} \toprule
        hyperparameter & value \\ \midrule 
        K  (number of clusters for k-means) :   \\ 
        $\quad$ umaze & $20$ \\
        $\quad$ medium & $40$ \\ 
        $\quad$ large & $80$ \\
        $\epsilon$ (probability of augmenting a goal) & 0.5 \\
        \bottomrule
    \end{tabular}
\end{table}

\subsection{Differences between the original D4RL and our datasets.}
\label{app:d4rl}

We made two main changes in the way our datasets (\cref{fig:point-weak-dataset}, \cref{fig:ant-weak-dataset}) were collected compared to the original D4RL datasets (\cref{fig:d4rl}). \emph{First}, we ensure that different data collecting policies have distinct navigation regions, with only a small overlapping region. This change helps to clearly distinguish between algorithms that can and cannot perform combinatorial generalisation. \emph{Second}, the agent in the original D4RL datasets often moves in a direction that is largely dependent on its current location in the maze. For example in the topmost row of the umaze, the D4RL policy always moves towards the right. To reduce such spurious relations, we randomize the start-state and goal sampling, for the data collecting policies. That is, in the topmost row of our umaze datasets, the data collecting policy moves both towards the right or left, depending on its start-state and goal. Details about how such spurious relations can hamper combinatorial generalisation are discussed in~\cref{app:spurious}.

\begin{figure}[t]
    \centering
    \includegraphics[width=\linewidth]{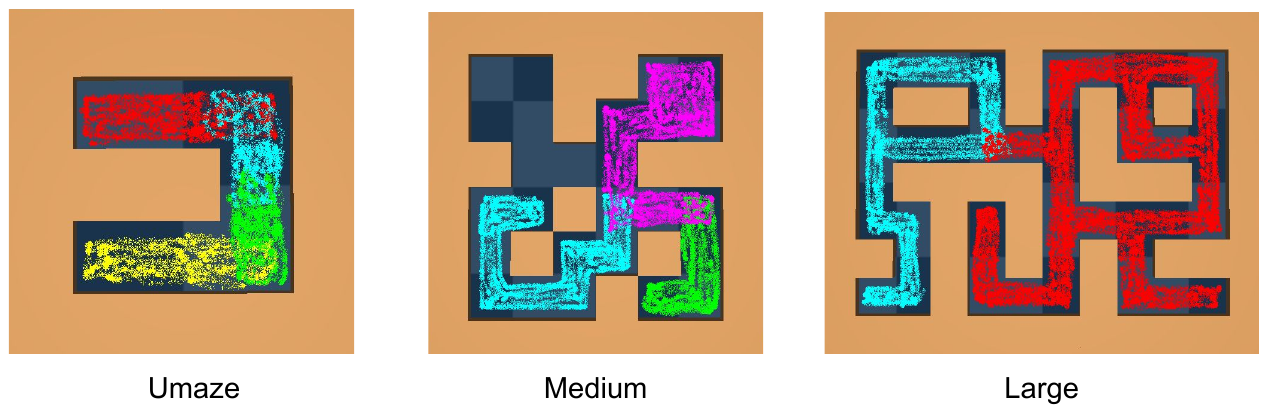}
    \caption{\footnotesize Offline datasets that we collect for the ``ant" mazes. Different colors represent the navigation regions of different data collecting policies. See~\cref{fig:point-weak-dataset} for a similar visualisation of the ``point" maze datasets.
    }
    \label{fig:ant-weak-dataset}
\end{figure}

\begin{figure}[h]
    \centering
    \includegraphics[width=\linewidth]{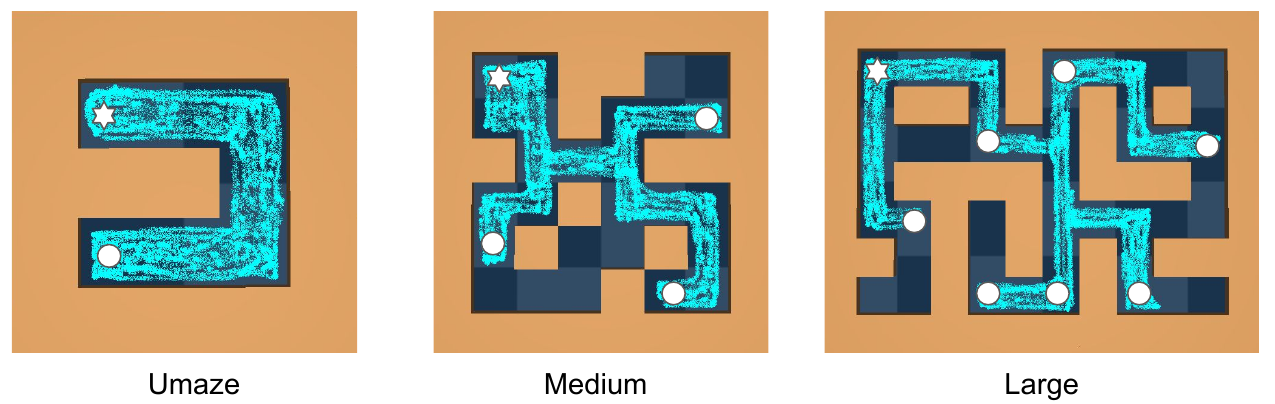}
    \vspace{-1em}
    \caption{\footnotesize Similar to~\cref{fig:point-weak-dataset} and~\cref{fig:ant-weak-dataset}, we create a visualisation of the original d4rl dataset. This is not an exact visualisation of the actual trajectories that are present inside those datasets, but a visualisation of the data collecting policies that those datasets used~\citep{fu2021d4rl}. During data collection, the policy starts from one starting region, which is marked by a white star. The policy navigates to a goal selected from one of the goal regions, which are marked by white circles. During test time, start-states and goals are selected from similar starting and goal regions, making it difficult to evaluate the combinatorial generalisation of offline RL algorithms. 
    }
    \label{fig:d4rl}
    \vspace{-1em}
\end{figure}

\section{Proofs}
\label{app:proofs}

\subsection{Proof of Lemma~\ref{lemma:weak-stitching}}
\label{subsc:proof_weak_stitching}

We prove this Lemma by providing a simple counterexample. Consider the deterministic MDP shown in \autoref{fig:weak-stitching} which has five states $[1,5]$ and two actions $\{$right, up$\}$. The states four and five are absorbing states; once the agent enters one of these states, it will stay there for eternity. There are two data collecting agents  $\textcolor{blue}{\beta_{h=1}}$ and $\textcolor{red}{\beta_{h=2}}$, which navigate upward from state two to state five, and rightward from state one to state four respectively. Both policies collect equal amount of data $p(\textcolor{blue}{h=1}) = p(\textcolor{red}{h=2}) = 0.5$. The BC policy $\textcolor{violet}{\beta}$ (\eqref{eq:policy-mixture}) is shown in \textcolor{violet}{purple} to indicate that it is obtained from combining data from both \textcolor{blue}{blue} and \textcolor{red}{red} policies. We will prove that the training and testing distribution (LHS and RHS of \cref{eq-weak-stitching}) are not equal for the counterexample state-goal pair $\{s_t=2, s_{t+}=4\}$. 
    
    \begin{equation*}
    \hspace{-2em}
    \begin{aligned}[t]
    & \E_{p(h)}\left[p_+^{\beta_h}(s_{t+}=4 \mid s_t=2) p_+^{\beta_h}(s_t=2)\right] \\
    &= \frac{ p_+^{\textcolor{blue}{\beta_{h=1}}}(4 \mid 2) p_+^{\textcolor{blue}{\beta_{h=1}}}(2)}{2} + \frac{ p_+^{\textcolor{red}{\beta_{h=2}}}(s_{t+}=4, s_t=2) }{2}\\
    &= \frac{p_+^{\textcolor{blue}{\beta_{h=1}}}(4 \mid 2) (1-\gamma)}{2} + \frac{0}{2}\\
    &= \frac{(1-\gamma)^2}{2} \sum_{t=0}^\infty\gamma^{t} p_t^{\textcolor{blue}{\beta_{h=1}}}(4 \mid 2)\\
    &= \frac{(1-\gamma)^2}{2} \times 0 = 0
    \end{aligned}
    \quad \vline \quad
    \begin{aligned}[t]
    & p_+^{\textcolor{violet}{\beta}}(s_{t+}=4 \mid s_t=2) p_+^{\textcolor{violet}{\beta}}(s_t=2) \\
    &= \frac{p_+^{\textcolor{violet}{\beta}}(s_{t+}=4 \mid s=2)(1-\gamma)}{2} \\
    &= \frac{(1-\gamma)^2}{2} \sum_{t=0}^\infty\gamma^{t} p_t^{\textcolor{violet}{\beta}}(4 \mid 2) \\
    &= \frac{(1-\gamma)^2}{2} (\frac{\gamma^2}{2} + \frac{\gamma^3}{2} + \dots) \\
    &= \frac{(1-\gamma)\gamma^2}{4}
    \end{aligned}
    \end{equation*}
    
The LHS and RHS are unequal for all values of $\gamma \in (0, 1)$. 

\subsection{Temporal data augmentation does approximate one-step stitching.}
\label{app:stitch}
\paragraph{Intuition.} The reason why OCBC sampling does not lead to stitching is that it samples a context $h\sim p(h \mid s)$ at the start state, and commits to it by following $\beta_h$ for the entire trajectory. On the other hand, the BC policy follows an average of all behaviour policies at every step of trajectory ($\sum_h \beta_h(a \mid s)p^\beta(h \mid s)$). One could think of the BC policy, as sampling a new context $h\sim p(h)$ at every timestep of the trajectory, and following the corresponding $\beta_h$ for that timestep. Intuitively, this ensures that all possible \textit{combinations} of behaviors are sampled by the BC policy. These two ways of sampling trajectories : (1) sampling a context only once at the start and committing to it or (2) sampling a new context at every timestep, are two extremes, and one can think of intermediate ways of sampling trajectories. For example, sample a context $h\sim p(h)$ and a time duration $t\sim p(t)$, and follow $\beta_h$ for timesteps "$t$", and then sample a new context $\tilde{h}\sim p(h)$ and follow $\beta_{\tilde{h}}$ for the rest of the trajectory. Since this form of sampling trajectories changes context only once per trajectory, we call the resulting non-stationary policy as one-step stitching policy. If we do this process twice per trajectory, the resulting non-stationary policy is called two-step stitching policy. According to this notation, the BC policy can be thought of as per-step stitching policy\footnote{The BC policy is a stationary policy, as it samples a new contexts at every timestep. There is no dependency on time.}. We will prove that for any state-action pair, the support of the goals reached by $n$-step stitching policies as $n$ increases, is non decreasing. Hence, sampling from the distribution of $n$-step stitching policies with a larger $n$, can sample unseen combinations of state-goal pairs. Finally, we will prove that applying data-augmentation once, under smoothness assumptions, is approximating the distribution of the one-step stitching policy.

\paragraph{N-step stitching policy.} Before moving to the proofs, we need to define the distribution of future states sampled from the $n$-step stitching policy. Let $p(t)$ be the duration sampling distribution which samples the timestep at which to change the context. We assume that $p(t)$ has non zero support for all time-steps ($\{0, 1, \dots\}$). In our case, $p(t)$ is the geometric distribution (geom($1-\gamma$)). Hence, for a state-action pair $s,a$, the distribution over goals reached by the $n$-step stitching policy is

\begin{align*}
&p^{\text{n-step}}(g \mid s,a) \\
&= \int_{w_{1:n}} \sum_{h_1} p(h_1 \mid s,a) p_+^{\beta_{h_1}}(w_1 \mid s,a) \Pi_{i=2}^{n+1} \sum_{h_i} p(h_i \mid w_{i-1}) p_+^{\beta_{h_i}}(w_i \mid w_{i-1})  dw_{1:n}
\end{align*}

Here $w_i$ is the sub-goal at which the $i^{th}$ switch is made and $w_{n+1}$ is defined as $g$. 

\begin{lemma}
\label{lemma:helper-1}
For $n \in \mathbb{N}$, for all $s,a$, the support of $p^{\text{n+1-step}}(g \mid s,a)$ is at least equal to the support of $p^{\text{n-step}}(g \mid s,a)$, assuming that the duration sampling distribution $p(t)$ has non zero support for all timesteps (for example, geom($1-\gamma$)).

\begin{proof}
Assume that a goal-state $g$ belongs to  $\supp{p^{\text{n-step}}(g \mid s,a)}$. We will prove that this implies that $g \in \supp{p^{\text{n+1-step}}(g \mid s,a)}$. Because $g \in \supp{p^{\text{n-step}}(g \mid s,a)}$, we can assume without loss of generality that $g$ is visited $k$ time-steps after switching the context $n$ times. 

Before switching for $n+1^{th}$ time, the distribution of states visited by the $n+1$-step policy is identical to the distribution of $n$-step policy. The $n+1^{th}$ switching duration is drawn from $t_{n+1} \sim p(t)$. Because $p(t)$ has support over all timesteps ($\supp p(t) = \{0 + \mathbb{N}\}$), we know that $p(t_{n+1} > k) > 0$. That is, the probability that $n+1$-step policy does not make the $n+1^{th}$ switch for atleast $k$ timesteps is non zero. This proves that for any finite value of $k$, there is a non zero probability that the distribution of states visited by the $n+1$-step policy is identical to the $n$-step policy for atleast $k$ timesteps after the $n^{th}$ switch. Hence the goal state $g$ can also be visited by $n+1$-step policy. Hence proved that $\supp p^{\text{n-step}}(g \mid s,a) \subseteq \supp p^{\text{n+1-step}}(g \mid s,a).$ 
\end{proof}
\end{lemma}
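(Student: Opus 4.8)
My plan is to recognise the defining product of the $n$-step distribution as the $(n{+}1)$-fold application of a single ``one-segment'' stitching kernel, and then to exploit the fact that the duration distribution puts positive mass on $t=0$: an extra context switch can always be made \emph{trivially} (staying in place), so it can never remove a reachable goal. This turns a statement about trajectories into a one-line domination between the two measures.

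First I would isolate the final factor of the product. Writing the one-segment stitching kernel as
\begin{equation*}
K(w' \mid w) \triangleq \sum_h p(h \mid w)\, p_+^{\beta_h}(w' \mid w),
\end{equation*}
and pulling the last ($i=n+2$) factor out of the product defining $p^{\text{n+1-step}}(g\mid s,a)$ while recognising the remaining $n+1$ factors (integrated over $w_{1:n}$) as $p^{\text{n-step}}(w_{n+1}\mid s,a)$, I obtain the recursion
\begin{equation*}
p^{\text{n+1-step}}(g \mid s,a) = \int_w K(g \mid w)\, p^{\text{n-step}}(w \mid s,a)\, dw .
\end{equation*}
This is a routine rearrangement of the given expression and needs no induction: it holds for every $n$ and reduces the claim to a property of a single application of $K$.

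The key step is to lower-bound $K$ by a ``stay-put'' operator. Each discounted occupancy expands as $p_+^{\beta_h}(\cdot \mid w) = (1-\gamma)\sum_{t\ge 0}\gamma^t p_t^{\beta_h}(\cdot \mid w)$, and its $t=0$ term is exactly $(1-\gamma)\,\delta_w$, since $p_0^{\beta_h}(\cdot \mid w)$ is the point mass at $w$. Because the duration distribution ($\mathrm{geom}(1-\gamma)$) assigns positive probability $1-\gamma$ to $t=0$, this atom is present with positive weight, so $K(\cdot \mid w) \ge (1-\gamma)\,\delta_w$ as a measure for every $w$. Substituting into the recursion yields
\begin{equation*}
p^{\text{n+1-step}}(\cdot \mid s,a) \ \ge\ (1-\gamma)\, p^{\text{n-step}}(\cdot \mid s,a)
\end{equation*}
as measures; hence any set of positive $p^{\text{n-step}}$-mass has positive $p^{\text{n+1-step}}$-mass, and taking such sets to be shrinking neighbourhoods of a point of $\supp p^{\text{n-step}}(\cdot\mid s,a)$ gives $\supp p^{\text{n-step}}(\cdot\mid s,a)\subseteq \supp p^{\text{n+1-step}}(\cdot\mid s,a)$, which is the claim. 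Notably only $p(t=0)>0$ is used, a single consequence of the stated full-support hypothesis.

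I expect the main obstacle to be the measure-theoretic bookkeeping in a continuous state space: the ``extra switch stays in place'' intuition is carried precisely by the Dirac atom $\delta_w$ inside $K(\cdot\mid w)$, and one must argue through this atom rather than by naively setting $w=g$ inside the integral, which would contribute zero mass whenever $p^{\text{n-step}}$ has a density. In a finite MDP, such as the paper's examples, sums replace integrals and the subtlety vanishes: the $w=g$ summand directly witnesses $g\in\supp p^{\text{n+1-step}}$. I would therefore present the atom-domination bound as the rigorous core and remark that it specialises to the elementary ``remain at $g$ for zero steps'' argument in the discrete case, matching the informal sketch in the statement.
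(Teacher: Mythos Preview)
Your proof is correct and takes a genuinely different route from the paper's. The paper argues at the trajectory level: since $g$ is reached $k$ steps after the $n$th context switch, the $(n{+}1)$-step policy can reproduce this by drawing a switching time $t_{n+1}>k$, so that $g$ is visited before the extra switch ever fires; this relies on the \emph{unbounded} tail of $p(t)$. You instead work directly with the kernel-composition formula, peel off the last factor to obtain the recursion $p^{\text{n+1-step}}=\int K(\cdot\mid w)\,p^{\text{n-step}}(w\mid s,a)\,dw$, and then exploit the Dirac atom $K(\cdot\mid w)\ge(1-\gamma)\,\delta_w$ coming from the $t=0$ term of every discounted occupancy. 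Your argument needs only $p(0)>0$ (strictly weaker than full support), delivers the quantitative domination $p^{\text{n+1-step}}\ge(1-\gamma)\,p^{\text{n-step}}$ rather than a bare support inclusion, and avoids having to reconcile the informal ``switching-policy'' narrative with the written formula for $p^{\text{n-step}}$. The paper's version stays closer to the stitching intuition; yours is more self-contained and is immediately rigorous in both discrete and continuous state spaces, where your remark about passing through the atom rather than the naive $w=g$ summand is exactly the right caveat.
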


Using the above lemma, we can say that 
\begin{equation}
\label{eq:subset}
\supp p^{\text{n-step}}(g \mid s,a) \subseteq \supp p^{\text{n+1-step}}(g \mid s,a) \subseteq \supp p^{\text{n+2-step}}(g \mid s,a) \dots
\end{equation}

Hence proved that for any state-action pair, the support of the goals reached by n-step stitching policies as n increases, is non decreasing. And from ~\ref{lemma:weak-stitching}, we know that there can be states which are only visited by $\{n>0\}$-step stitching policies. Hence, it is possible that the relations in ~\cref{eq:subset} are strict.

\paragraph{Temporal data augmentation approximates one-step stitching policy.}

Under smoothness assumptions for the discounted state occupancy distribution of the data collecting policies, we can show that temporal data augmentation~\cref{fig:augmentation} approximates the distribution of one-step stitching policy. Specifically, we assume that for all $s,a,g$ pairs and all data collecting policies $\beta_h$, $p_+^{\beta_h}(g \mid s,a)$ is $L$ Lipschitz

\begin{equation}
\label{eq:smoothness}
    |p_+^{\beta_h}(g \mid s,a) - p_+^{\beta_h}(w \mid s,a)| \leq L (||g-w||_2)
\end{equation}

This is an important assumption for temporal data augmentation as it ensures that states which are close together have similar reachability. Prior work has also studied this assumption~\cite{hinderer2005, gelada2019deepmdp, rachelson2010} and applied it to practical settings~\cite{alfred1997}. Finally, our theoretical analysis uses a form of temporal data augmentation which clusters states only with their nearest neighbour. That is, each group in ~\cref{alg:ours} contains atmost 2 unique states.

\begin{lemma}
Under the smoothness assumptions above, for all $s,a$ pairs, temporal data augmentation $p^{\text{temp-aug}}(g \mid s,a)$ approximately samples goal according the distribution of one-step stitching policy ($p^{\text{1-step}}(g \mid s,a)$).

\begin{proof}
    
\begin{align*}
& p^{\text{temp-aug}}(g \mid s,a)\\
&\overset{a}{=} \int_w \sum_h p(h \mid s,a) (p_+^{\text{temp-aug}}(w \mid s,a)) \sum_{\tilde{h}} p(\tilde{h} \mid w) p_+^{\beta_{\tilde{h}}}(g \mid w) dw\\
&\overset{b}{=}  \int_{w_1} \sum_h p(h \mid s,a) (p_+^{\beta_h}(w_1 \mid s,a) \pm \epsilon L) \sum_{\tilde{h}} p(\tilde{h} \mid w_1) p_+^{\beta_{\tilde{h}}}(g \mid w_1) dw_1\\
&= \int_{w_1} \sum_h p(h \mid s,a) p_+^{\beta_h}(w_1 \mid s,a) \sum_{\tilde{h}} p(\tilde{h} \mid w_1) p_+^{\beta_{\tilde{h}}}(g \mid w_1) dw_1  \pm \epsilon L \int_{w_1} \sum_{\tilde{h}} p(\tilde{h} \mid w_1) p_+^{\beta_{\tilde{h}}}(g \mid w_1) dw_1\\
&= \int_{w_1} \sum_h p(h \mid s,a) p_+^{\beta_h}(w_1 \mid s,a) \sum_{\tilde{h}} p(\tilde{h} \mid w_1) p_+^{\beta_{\tilde{h}}}(g \mid w_1) dw_1  \pm \mathcal{O}(\epsilon L) \\
&= p^{\text{1-step}}(g \mid s,a) \pm \mathcal{O}(\epsilon L)
\end{align*}
\end{proof}
Here $\epsilon$ is the maximum cutoff distance used by the temporal data augmentation to group states together. 
\end{lemma}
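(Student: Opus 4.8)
The plan is to start from the generative description of the temporal augmentation in \cref{alg:ours}, specialised to the theoretical variant in which every cluster contains at most two states, and to write down the induced distribution over augmented goals $p^{\text{temp-aug}}(g\mid s,a)$ in closed form. Concretely: a context $h\sim p(h\mid s,a)$ is drawn, $\beta_h$ is rolled out from $(s,a)$ so that the \emph{original} goal $w'$ is distributed as $p_+^{\beta_h}(w'\mid s,a)$; the waypoint $w$ is then the nearest dataset neighbour of $w'$, so $||w-w'||_2\le\epsilon$ where $\epsilon$ is the clustering cutoff distance; a fresh context $\tilde h\sim p(\tilde h\mid w)$ is drawn and $\beta_{\tilde h}$ is rolled out from $w$ to produce $g\sim p_+^{\beta_{\tilde h}}(g\mid w)$. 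Marginalising over $h$, $\tilde h$ and the waypoint yields identity $(a)$ in the statement, where the waypoint density $p_+^{\text{temp-aug}}(w\mid s,a)$ denotes the law of the nearest neighbour of a state drawn from $p_+^{\beta_h}(\cdot\mid s,a)$. I would also recall the definition of the one-step stitching distribution $p^{\text{1-step}}(g\mid s,a)$ from \cref{app:stitch}: it is the very same expression, except the waypoint is the \emph{exact} roll-out endpoint, i.e.\ distributed as $p_+^{\beta_h}(w\mid s,a)$.

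The second step is to isolate and control the single discrepancy between the two expressions. Since the waypoint $w$ lies within distance $\epsilon$ of the true endpoint $w'$, the smoothness assumption \cref{eq:smoothness} gives $|p_+^{\beta_h}(w\mid s,a)-p_+^{\beta_h}(w'\mid s,a)|\le L\epsilon$ for every context $h$, hence $p_+^{\text{temp-aug}}(w\mid s,a)=p_+^{\beta_h}(w\mid s,a)\pm L\epsilon$, which is step $(b)$. Substituting this and distributing the outer integral splits $p^{\text{temp-aug}}(g\mid s,a)$ into a main term that is \emph{exactly} $p^{\text{1-step}}(g\mid s,a)$ plus a remainder of the form $\pm L\epsilon\int_{w}\sum_{\tilde h}p(\tilde h\mid w)\,p_+^{\beta_{\tilde h}}(g\mid w)\,dw$. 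Bounding that remainder by $\mathcal{O}(L\epsilon)$ — using that its integrand is a mixture of conditional occupancy densities and that the state space carries a finite-volume base measure — delivers $p^{\text{temp-aug}}(g\mid s,a)=p^{\text{1-step}}(g\mid s,a)\pm\mathcal{O}(L\epsilon)$, the desired conclusion, with ``approximately samples'' read as an $\mathcal{O}(L\epsilon)$ additive deviation of the densities, uniform in $g$.

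I expect the main obstacle to be the remainder bound together with the bookkeeping around the nearest-neighbour step rather than the algebra: one must (i) argue that the cutoff $\epsilon$ genuinely upper-bounds $||w-w'||_2$ for the theoretical two-state clusters, (ii) propagate the pointwise $L\epsilon$ error through the outer integral without it blowing up, which tacitly requires the occupancy measures to be uniformly bounded (or integrated against a finite base measure), and (iii) be explicit that the error accumulates only \emph{once}, since by construction the augmentation switches context exactly once per trajectory, matching the definition of $p^{\text{1-step}}$. Everything else is a routine rearrangement of the integrals in identity $(a)$ and an application of the triangle inequality to the Lipschitz estimate.
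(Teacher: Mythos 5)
Your proposal follows essentially the same route as the paper's proof: the same factorisation of $p^{\text{temp-aug}}(g\mid s,a)$ over the sampled context, waypoint, and fresh context (identity $(a)$), the same Lipschitz/nearest-neighbour perturbation $\pm L\epsilon$ of the waypoint density (step $(b)$), and the same split into the exact $p^{\text{1-step}}$ term plus an $\mathcal{O}(\epsilon L)$ remainder. If anything, you are slightly more explicit than the paper about why the remainder integral is $\mathcal{O}(1)$ (boundedness of the occupancy densities / finite base measure) and about the error entering only once, which is consistent with, not a departure from, the paper's argument.
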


$\textit{(a)}$ is followed by the fact that temporal data augmentation first sample a waypoint $w$, and then sample a future goal $g$ from the trajectory containing $w$. This new trajectory can correspond to any data collecting policy $\beta_h$ with probability $p(\tilde{h} \mid w)$. $\textit{(b)}$ $w_1$ is the initial goal sampled by temporal data augmentation, which is then substituted by its nearest neighbour $\tilde{w}$. It follows from \cref{eq:smoothness} that $p_+^{\beta_h}(w \mid s,a) - \epsilon L \leq p_+^{\beta_h}(w_1 \mid s,a) \leq p_+^{\beta_h}(w \mid s,a) + \epsilon L$, as epsilon is the maximum possible $L2$ norm between $w_1$ and $w$.

\end{document}